\pdfoutput=1
\documentclass{article}
\label{key}
\usepackage{hyperref}
\usepackage{authblk}
\usepackage{amsmath}
\usepackage{amssymb}
\usepackage{amsthm}
\usepackage{comment}
\usepackage{booktabs}
\usepackage{xcolor}
\newtheorem{problem}{Problem}
 
\usepackage{tikz,ifthen,pgfplots}                       % Graphen und kommutative Diagramme. Muss nach xcolor eingebunden werden.
\usetikzlibrary{shapes.geometric}
\usetikzlibrary{decorations}
\usetikzlibrary{arrows,decorations.pathmorphing,backgrounds,positioning,fit,petri}
\usetikzlibrary{decorations.pathreplacing} % LATEX and plain TEX when using Tik Z
\usetikzlibrary{decorations.markings} % LATEX and plain TEX when using Tik Z
\usetikzlibrary{decorations.shapes} % LATEX and plain TEX when using Tik Z
\usetikzlibrary{arrows.meta} % LATEX and plain TEX when using Tik Z
\usetikzlibrary{quotes,angles} % Drawing angles using the PG 3.0 angles and quotes libraries
\usetikzlibrary{positioning}
\usepgfplotslibrary{fillbetween}
\usetikzlibrary{patterns} % LATEX and plain TEX when using Tik Z
\usetikzlibrary{patterns,chains}
\usetikzlibrary{hobby}
\usepackage{tikz-3dplot}

\usepackage{pgfplots}

\usepackage{graphics}

\usepackage{enumitem}

\usepackage[caption=false]{subfig}

\usepackage[ruled]{algorithm2e}
\usepackage{fancybox}

%%%%%New commands%%%%%%%%%%%%
\newcommand{\cl}{\mathcal{C}}

\newcommand{\pot}{2^E}

\newcommand{\opn}[1]{\operatorname{#1}}

\newcommand{\conv}[1]{\operatorname{conv}(#1)}

\newcommand{\covl}{\operatorname{C}_\downarrow}

\newcommand{\covu}{\operatorname{C}_\uparrow}

\newcommand{\NP}{$\opn{NP}$}

\newcommand{\HSP}{\textsc{Half-Space Separation}}

\newcommand{\MCSep}{MCSS}

\newcommand{\bigO}[1]{O \left( #1 \right)}

\newcommand{\AlgoSep}{MCSS}

\newcommand{\clop}{\rho}

\newcommand{\clSystem}{(E, \cl)}

\newcommand{\clSystemcc}{(E, \cl_\cc)}

\newcommand{\Rd}{\mathbb{R}^d}

\newcommand{\cg}{\gamma}

\newcommand{\cc}{\rho}

\newcommand{\cla}{\lambda}

\newcommand{\convexhull}{\alpha}

\newcommand{\vclog}{(V,\mathcal{C}_{\cg})}

\newcommand{\cC}{\mathcal{C}}

\newcommand{\Top}[1]{\sup #1}
\newcommand{\Bot}[1]{\inf #1}
\newcommand{\PosTop}{\Top{A}}
\newcommand{\PosBot}{\Bot{A}}
\newcommand{\NegTop}{\Top{B}}
\newcommand{\NegBot}{\Bot{B}}
\newcommand{\TopA}{\Top{A}}
\newcommand{\BotA}{\Bot{A}}
\newcommand{\TopB}{\Top{B}}
\newcommand{\BotB}{\Bot{B}}
\newcommand{\TopI}{\Top{I}}
\newcommand{\BotF}{\Bot{F}}
\newcommand{\joinp}[1]{\sup\{#1\}}
\newcommand{\meetp}[1]{\inf\{#1\}}
\newcommand{\join}[1]{\sup}

\newcommand{\meet}[1]{\inf}

\newcommand{\topL}{\top_L}
\newcommand{\botL}{\bot_L}

\newcommand{\gen}{\operatorname{gen}}

\newcommand{\No}{{\sc No}}

\newtheorem{Proposition}{Proposition}
\newtheorem{Theorem}{Theorem}
\newtheorem{Corollary}{Corollary}
\newtheorem{Example}{Example}
\newtheorem{remark}{Remark}
\newtheorem{lemma}{Lemma}

% Heading arguments are {volume}{year}{pages}{date submitted}{date published}{paper id}{author-full-names}

%\jairheading{}{}{}{6/91}{}

% Short headings should be running head and authors last names

%\ShortHeadings{Maximal Closed Set and Half-Space Separations}{Seiffarth, Horv\'{a}th and Wrobel}
%\firstpageno{1}
	
\title{Maximal Closed Set and Half-Space Separations in  Finite Closure Systems\thanks{An early version of this paper appeared in \cite{Seiffarth19}.}
}

\author[1]{Florian Seiffarth}
\author[1,2,3]{Tam\'{a}s Horv\'{a}th}
\author[1,2,3]{Stefan Wrobel}

\affil[ ]{\footnotesize\textit {\{seiffarth, horvath, wrobel\}@cs.uni-bonn.de}}
\affil[1]{\footnotesize Dept. of Computer Science, University of Bonn, Bonn,
	Germany}
\affil[2]{\footnotesize Fraunhofer IAIS, 
	Schlo{\ss} Birlinghoven, Sankt Augustin, Germany}
\affil[3]{\footnotesize Fraunhofer Center for Machine Learning, 
	Schlo{\ss} Birlinghoven, Sankt Augustin, Germany}

\begin{document}
	
%	\begin{frontmatter}
%		 %use optional labels to link authors explicitly to addresses:
%		 \author[label1]{Florian Seiffarth}
%		 \author[label1,label2]{Tam\'{a}s Horv\'{a}th}
%		 \author[label1,label2]{Stefan Wrobel}
%		 \affiliation[label1]{organization={Dept. of Computer Science, University of Bonn},
%			             city={Bonn},
%			             country={Germany}}
%		
%		 \affiliation[label2]{organization={Fraunhofer IAIS and Fraunhofer Center for Machine Learning},
%			             addressline={Schlo{\ss} Birlinghoven},
%			             city={Sankt Augustin},
%			             country={Germany}}
%	\end{frontmatter}
	
\date{ }

	\maketitle

	\begin{abstract}
		Several concept learning problems can be regarded as special cases of half-space separation in abstract closure systems over finite ground sets. 
		For the typical scenario that the closure system is implicitly given via a closure operator, we show that the half-space separation problem is NP-complete.
		As a first approach to overcome this negative result, we relax the problem to maximal closed set separation, give a generic greedy algorithm solving this problem with a linear number of closure operator calls, and show that this bound is sharp.
		For a second direction, we consider  Kakutani closure systems and prove that they are algorithmically characterized by the greedy algorithm.
		As a first special case of the general problem setting, we consider Kakutani closure systems over graphs and give a sufficient condition for this kind of closure systems in terms of forbidden graph minors.
		For a second special case, we then focus on closure systems over finite lattices, give an improved adaptation of the generic greedy algorithm, and present an application concerning subsumption lattices. 
	\end{abstract}

	\textbf{Keywords:} machine learning, closure systems, half-space separation
	
	\section{Introduction}
	We consider the following computational problem: Given a \textit{closure system}~\cite{Davey/Priestley/2002} $\cl \subseteq 2^E$ over some \textit{finite} ground set $E$ and $A,B \subseteq E$, find $A',B' \in \cl$ such that $A \subseteq A'$, $B \subseteq B'$, and $A'$ and $B'$ form a partitioning of $E$, if such $A'$ and $B'$ exist; otherwise return the answer ``\No".
	This problem can be regarded as an adaptation of that of binary \textit{half-space separation} in $\Rd$, a well-studied problem in \textit{machine learning} (see, e.g., \cite{Rosenblatt58,BoserGuyonVapnik92,Vapnik/98,FreundS99}). 
	Indeed, this latter problem can be viewed as follows: The underlying closure system is given by the family $\cl'$ of all (topologically) closed convex subsets of $\Rd$ and for all disjoint sets $X,Y \subseteq \Rd$, the task is to return two half-spaces $X',Y' \in \cl'$ such that $X \subseteq X'$, $Y \subseteq Y'$, and $X'$ is the complement of $Y' \setminus H$, where $H$ is the boundary hyperplane of $Y'$, if the convex hulls of $X$ and $Y$ are disjoint. Otherwise, the algorithm is required to return the answer ``\No".
%	Thus, $X'$ and its complement $Y'\setminus H$ form a partitioning of $\Rd$.  
	The correctness of this generic method for $\Rd$ follows from the result of \cite{Kakutani37} that any two disjoint convex sets in $\Rd$ are always separable by a hyperplane.
	In order to reflect this problem adaptation, the output closed sets $A',B'$ of the generic problem above are referred to as \textit{(abstract) half-spaces} and the input sets $A,B$ as \textit{half-space separable} sets in $\cl$. 
	
	While half-space separability in $\Rd$ is a well-understood problem, its adaptation to closure systems over finite ground sets has received less attention in the machine learning literature. This is somewhat surprising, as several problems including, for example, node classification in networks (see, e.g., \cite{ThiessenG21}) or certain consistent hypothesis finding problems in inductive logic programming~(see, e.g., \cite{NieWol97}) can be viewed as special cases of half-space separation in finite closure systems. 	
	In contrast, several results concerning different formal properties of \textit{abstract} half-spaces over finite domains have been published in \textit{theoretical computer science} and \textit{geometry}~(see, e.g., \cite{Chepoi1994,ellis1952,Kubis2002,vandeVel1984}).
	Using the fact that the family of convex sets in $\Rd$ forms a {\em closure system}, the underlying idea of adapting hyperplane separation in $\Rd$ to  arbitrary finite ground sets $E$ is to consider some semantically meaningful closure system $\cl$ over $E$. A subset $H$ of $E$ is considered as an {\em (abstract) half-space}, if $H$ and its complement both belong to $\cl$. In this field of research there is a distinguished focus on characterization results of {\em Kakutani closure systems} (see, e.g.,~\cite{Chepoi1994,Vel1993}). This kind of closure systems satisfy the property that any two subsets of the ground set are half-space separable in the closure system if and only if their closures are disjoint.     
	
	Utilizing the results of other research fields~\cite{Chepoi1994,ellis1952,Kubis2002,vandeVel1984}, in this work we first study the {\em algorithmic} aspects of half-space separation in finite closure systems.
	% for the general case that the algorithm has access to the closure system via the corresponding closure operator only. We also discuss the adaptation of this general setting to the special cases that the closure system is defined over graphs and lattices. 
	In the problem setting we assume that the closure systems are given \textit{implicitly} by a {\em closure operator}. This assumption is justified by the fact that their cardinality can be exponential in that of the underlying ground set. 
	In addition, we assume that no additional domain specific information is available for the algorithm.
	We show that half-space separability in this generic problem setting is \NP-hard. 
	We note that this negative result is independent of the time complexity of the closure operator, i.e., it holds also for the case that the closure of a set can be calculated in unit time.
	If the closure operator can be computed in time polynomial in the size of the input set, the underlying half-space separation problem lies in {\NP} (i.e., half-space separation is \NP-complete). 

	To overcome this negative complexity result, we first relax the problem to {\em maximal} closed set\footnote{Throughout this work we consistently use the nomenclature ``closed sets'' by noting that ``convex'' and ``closed'' are synonyms by the standard terminology of this field.} separation.
	That is, we are interested in finding two closed sets that are disjoint, contain the two input sets, and none of them has a proper superset in the closure system possessing these properties. 
	For this relaxed problem we give a simple \textit{greedy} algorithm and show that it is \textit{efficient} and \textit{optimal} in terms of the number of closure operator calls. 
	As a second approach, we then focus on \textit{Kakutani} closure systems.
	We first show that in order to decide whether a  closure system is Kakutani, any %deterministic 
	algorithm that has access to the closure system only via the corresponding closure operator requires \textit{exponentially} many closure operator calls in the worst-case. 
	Still, Kakutani closure systems remain highly interesting because there are various closure systems that are known to be Kakutani cf.~\cite{Chepoi1994}.
	% 	 (e.g., closure systems over chordal graphs~\cite{Chepoi1994} \textcolor{red}{FLORIAN}, and hence, trees or over distributed lattices~\cite{Kubis2002}).
	We also prove that the above mentioned simple greedy algorithm provides an \textit{algorithmic} characterization of Kakutani closure systems. That is, for all $A, B \subseteq E$, the output $A' \supseteq A$ and $B' \supseteq B$ of the greedy algorithm is a partitioning of $E$ if and only if the closures of $A$ and $B$ are disjoint.
	
	In the second part of the paper we consider \textit{domain specific} adaptations of maximal closed set and half-space separations to closure systems over \textit{graphs} and \textit{lattices}. 
	In particular, we show that closure systems over graphs induced by shortest paths~~\cite{doi:10.1137/0607049} are Kakutani if they do not contain the bipartite clique $K_{2,3}$ as a minor. The converse of this claim is, however, not true.
	This result, together with the characterization result of \cite{Chartrand/Harary/1967} immediately implies that closure systems over \textit{outerplanar} graphs and hence, over trees are Kakutani.
%	
%	using the notion of convexity for graphs induced by (geodesic) shortest paths~\cite{doi:10.1137/0607049}, we generalize a fundamental characterization result of Kakutani closure systems based on the Pasch axiom~\cite{Chepoi1994} to graph structured partitioning of finite sets. 
	%Potential practical applications of this more general result include \textit{graph clustering} \cite<see, e.g.,>{Schaeffer/2007} and \textit{graph partitioning} \cite<see, e.g.,>{Bulucetal16}. 
%	Although the Pasch axiom allows for a polynomial time naive algorithm for deciding whether a closure system over graphs is Kakutani or not, the algorithm is \textit{practically infeasible} even for small graphs.
%	As a second result concerning graphs, we therefore 
%	
	Regarding \textit{lattices}, we present an adaptation of the above mentioned greedy algorithm to closure systems over lattices. It computes a disjoint maximal ideal and filter that contain the two input sets.
	This adaptation has several algorithmic advantages over the generic greedy algorithm. 
	In particular,  it utilizes the facts that  in each iteration, the current \textit{ideal} and \textit{filter} can be represented by its \textit{supremum} and \textit{infimum}, respectively. Furthermore, their disjointness can be decided by comparing these two elements.
	For the special case that the elements of the lattice are subsets of some finite ground set, the number of closure operator calls is only quadratic in the cardinality of the domain, implying an exponential speed-up over the generic greedy algorithm mentioned above. 
	In addition to these results, we also show that the adaptation of the greedy algorithm preserves the characterization property, i.e., it provides an \textit{algorithmic} characterization of Kakutani closure systems over finite lattices. This characterization result is somewhat orthogonal to the characterization given in terms of \textit{distributivity} (see, e.g.,~\cite{Kubis2002}).

	The rest of the paper is organized as follows. In Section~\ref{sec:relatedwork} we discuss related work.
	In Section~\ref{sec:preliminaries}	we collect the necessary notions and fix the notation. 
	In Section~\ref{sec:maximalclosed} we define the problem settings, discuss potential applications, and study the complexity issues of half-space and maximal closed set separation in abstract closure systems over finite domains.
	Section~\ref{sec:kakutani} is devoted to Kakutani closure systems. 
	%In Section~\ref{sec:domainSpecific} we demonstrate the flexibility of the greedy algorithm and present possible improvements if the domain is known in advance. 
	Section~\ref{sec:domainspecific} is concerned with closure systems over graphs and lattices. 
	%Heuristic analysis for different adaptions and their results on graphs are reported in Section~\ref{sec:heuristics}. 
	Finally, in Section~\ref{sec:summary} we formulate some problems for further research.

	%!TEX root = TCS_Paper.tex
\section{Related Work}\label{sec:relatedwork}
Similarly to hyperplane separations in Euclidean spaces, different kinds of separations can be considered for (finite) closure systems using the definition of closed sets. In \cite{vandeVel1984} the most familiar separation axioms $(S1)$ - $(S4)$ are presented. They can be described as follows: $(S1)$ all singletons are closed, $(S2)$ two distinct elements can be separated by half-spaces, $(S3)$ every closed set is an intersection of half-spaces and $(S4)$ two disjoint closed sets can be separated by half-spaces.\footnote{Note that if $(S1)$ holds true then the implications $(S4)\implies(S3)\implies(S2)$ follow. In general we do not assume that singletons are closed but e.g. in case of geodesic closures in graphs and closures in lattices $(S1)$ holds true. An example where $(S1)$ does not hold is the closure system induced by the galois closure operator used in closed itemset mining.} In this work we concentrate on closure systems satisfying the most restrictive axiom $S4$ inspired by the Euclidean space counterpart result of \cite{Kakutani37} that convex sets can be separated by hyperplanes. This is motivated by the fact that basic machine learning algorithms such as the Perceptron \cite{Rosenblatt58} and Support Vector Machines \cite{Vapnik/98} heavily rely on the separation property in $\Rd$. Hence we want to analyze Kakutani closure systems, i.e., closure systems fulfilling the property $(S4)$ which guarantees complete separation. 

Some of the earliest characterization of Kakutani closure systems was obtained by Ellis \cite{ellis1952} generalizing results of Stone \cite{Stone38} that distributive lattices and Tukey \cite{Tukey1942} that real linear spaces fulfill $(S4)$. While Bair \cite{BAIR1975696} restricts to straight line spaces and Bryant and Webster \cite{BRYANT1973321} give a very basic overview the work of Chepoi \cite{Chepoi1994} analyzes the $(S4)$ property of closure systems espacially for n-ary convexities and interval convexities (e.g. graphs with geodesic convexity and lattices) in great details. One of his main results we are using troughout our paper is the characterization of Kakutani closure systems via the Pasch Axiom.

While the above papers concentrate on the theoretical aspects of half-space separation we are also interested in the algorithmic properties and complexity results. Concerning the complexity of the general half-space separation problem as far as we know we are the first showing that it is NP-hard using the complexity result of \cite{ARTIGAS20111968} for the special case of graphs. Moreover, we relax to the more general case of maximal disjoint closed sets which allows for further applications, see~\cite{Seiffarth20}.

Other recent work focusing on the applications of half-spaces separations in computer science espacially in the domain of graphs are \textit{cluster} recovery with queries~\cite{Bressan21}, 
\textit{vertex classification} in batch~\cite{Macedo19,Seiffarth19,Stadtlander21}, \textit{active learning}~\cite{ThiessenG21} and \textit{genome rearrangement}~\cite{Cunha18}.

	\section{Preliminaries}
	\label{sec:preliminaries}
	In this section we collect the necessary notions and fix the notation for set and closure systems. For details on closure systems and separation axioms (see, e.g., \cite{Chepoi1994,Davey/Priestley/2002,Vel1993}). 
	
	\paragraph{Closure Systems}
	For a set $E$, $\pot$ denotes the power set of $E$. 
	A \emph{set system} over a ground set $E$ is a pair $(E, \cl)$ with $\mathcal{C}\subseteq\pot$; $(E, \cl)$ is a \emph{closure system} if 
	it fulfills the following two properties:
	\begin{enumerate}[label=\roman*)]
		\item 
		$E\in\cl$ and
		\item 
		$X\cap Y\in\cl$  for all $X, Y\in\cl$. 
	\end{enumerate}		
	Throughout this paper by closure systems we always mean closure systems over {\em finite} ground sets (i.e., $|E| < \infty$).
	It is a well-known fact (see, e.g.,~\cite{Davey/Priestley/2002}) that any closure system can be defined by a \emph{closure operator}, i.e., function $\cc:\pot\rightarrow\pot$ satisfying 
%	$X\subseteq \cc(X)$ (\textit{extensivity}),
%	$\cc(X)\subseteq \cc(Y)$ whenever $X\subseteq Y$ (\textit{monotonicity}), and
%	$\cc(\cc(X))= \cc(X)$ (\textit{idempotency}) for all $X, Y\subseteq E$.
	the following properties for all $X, Y\subseteq E$:
	\begin{enumerate}[label=\roman*)]
		\item $X\subseteq \cc(X)$, \hfill (\textit{extensivity})
		\item $\cc(X)\subseteq \cc(Y)$ whenever $X\subseteq Y$,  \hfill (\textit{monotonicity})
		\item $\cc(\cc(X))= \cc(X)$. \hfill (\textit{idempotency})
	\end{enumerate}
	
	For a closure system $(E,\cl)$, the corresponding closure operator $\cc$ is defined by
	$$\cc(X) = \bigcap_{C \in \cl: X\subseteq C } C$$
	for all $X \subseteq E$.
	Conversely, for a closure operator $\cc$ over $E$ the corresponding closure system, denoted $(E,\cl_\cc)$, is defined by the family of its \textit{fixed points}, i.e.,
	$$\cl_\cc = \{X\subseteq E : \cc(X)=X \}.$$ 
	Depending on the context we sometimes omit the underlying closure operator from the notation and denote the closure system simply by $\clSystem$.
	%Using this equivalence between closure systems and closure operator we will denote in the following all closure systems by $(E, \cl_\cc)$ where $\cc$ denotes the corresponding closure operator.
	The elements of $\cl_\cc$ of a closure system $(E,\cl_\cc)$ will be referred to as {\em closed} or {\em convex} sets.
	This latter terminology is justified by the fact that closed sets generalize several properties of {\em convex hulls} in $\Rd$.
	As a straightforward example, for any finite set $E \subset\mathbb{R}^d$, the set system $(E,\cl_\convexhull)$  with $\convexhull : 2^E \to 2^E$ defined by
	\begin{equation}
	\label{eq:convexhull}
	\convexhull:  X \mapsto \conv{X} \cap E
	\end{equation} 
	for all $X \subseteq E$ is a closure system, where $\conv{X}$ denotes the convex hull of $X$ in $\mathbb{R}^d$.  We will refer to this type of closure systems as \textit{$\alpha$-closure systems}.	

	\paragraph{Separation in Closure Systems} \quad	
	We now turn to the generalization of binary separation in $\Rd$ by hyperplanes to binary separation in \textit{abstract} closure systems.\footnote{For a detailed introduction into this topic (see, e.g.~\cite{Vel1993}).} In the context of \textit{machine learning}, one of the most relevant and natural questions concerning closure systems $(E, \cl)$ is whether two subsets of $E$ are separable in $\cl$, or not.
	%separation whether it is possible to separate two distinct points in $E$ or two disjoint convex sets in $\cl$. 
	To state the formal problem definition, we follow the generalization of half-spaces in Euclidean spaces to closure systems from \cite{Chepoi1994}.
	\begin{comment}
	\begin{question}
	Do all this with convex structures. Then the greedy is also correct and one can say that if we cannot separate arbitrary convex sets then there is no polynomial algorithm by the paper.
	\end{question}
	\end{comment}
	More precisely, let $(E, \cl)$ be a closure system. Then $H\subseteq E$ is called a \emph{half-space} in $\cl$ if both $H$ and its complement, denoted $H^c$, are closed (i.e., $H,H^c\in\cl$). Note that $H^c$ is also a half-space by definition.
	\begin{comment}	
	\begin{question}
	Is it possible to define half-space partitions as $E = X\cup S\cup Y$ where $X, S, Y\in\cl$ and for all $X'\subset X, Y'\subset Y$ it holds $\cc(X'\cup Y')\cap S\not=\emptyset$. The set $S$ is something like a separator.
	\end{question}
	\end{comment}
	Two sets $A,B\subseteq E$ are \emph{half-space separable} if there is a half-space $H \in \cl$ such that $A\subseteq H$ and $B \subseteq H^c$; $H$ and $H^c$ together form a {\em half-space separation} of $A$ and $B$. The following property will be used many times in what follows:
	\begin{Proposition}
		\label{pr:halfspacesep}
		Let $\clSystemcc$ be a closure system, $H,H^c \in \cl$, and $A,B \subseteq E$. Then $H$ and $H^c$ form a half-space separation of $A$ and $B$ if and only if they form a half-space separation of $\cc(A)$ and $\cc(B)$.
	\end{Proposition}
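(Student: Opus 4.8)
The plan is to unfold the definition of half-space separation and reduce everything to the three defining properties of the closure operator $\cc$. By definition, $H$ and $H^c$ form a half-space separation of $A$ and $B$ precisely when $A \subseteq H$ and $B \subseteq H^c$; likewise, they form one of $\cc(A)$ and $\cc(B)$ precisely when $\cc(A) \subseteq H$ and $\cc(B) \subseteq H^c$. So the statement to prove is the equivalence of the inclusions $A \subseteq H$, $B \subseteq H^c$ with $\cc(A) \subseteq H$, $\cc(B) \subseteq H^c$. Since the two conjuncts are symmetric (the roles of $A,H$ and $B,H^c$ are interchangeable, and $H^c$ is itself a half-space in $\cl$), it suffices to establish one of them, say the equivalence of $A \subseteq H$ and $\cc(A) \subseteq H$, and then apply the same reasoning with $B$ and $H^c$.

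For the direction from $\cc(A) \subseteq H$ to $A \subseteq H$, I would simply invoke \emph{extensivity}: $A \subseteq \cc(A) \subseteq H$. This uses nothing about $H$ being closed. For the converse, the key observation --- and the only place where the hypothesis $H \in \cl$ enters --- is that $H$ is a fixed point of the closure operator, i.e. $\cc(H) = H$. Assuming $A \subseteq H$, \emph{monotonicity} gives $\cc(A) \subseteq \cc(H)$, and then $\cc(H) = H$ (equivalently, $H \in \cl_\cc$, which is the content of \emph{idempotency} applied to the closed set $H$) yields $\cc(A) \subseteq H$.

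There is no genuine obstacle here: the proposition is a direct consequence of extensivity, monotonicity, and the fact that half-spaces are closed. The only subtlety worth flagging explicitly is that both $H$ and $H^c$ must be closed for the argument to apply to the $A$-side and the $B$-side simultaneously --- which is exactly what the definition of a half-space guarantees, since it requires $H, H^c \in \cl$. I would therefore present the proof as two short symmetric paragraphs, one invoking extensivity and one invoking monotonicity together with $\cc(H)=H$ and $\cc(H^c)=H^c$.
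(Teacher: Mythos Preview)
Your proof is correct and follows essentially the same approach as the paper: extensivity for the ``if'' direction, and monotonicity together with $\cc(H)=H$ (resp.\ $\cc(H^c)=H^c$) for the ``only if'' direction. The paper compresses the latter into a single line (for $S\subseteq C\in\cl_\cc$ one has $\cc(S)\subseteq\cc(C)=C$), but the content is identical to what you wrote.
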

	\begin{proof}
		The ``if'' direction is immediate by the extensivity of $\rho$. The ``only-if'' direction follows from the fact that for any $S \subseteq E$ and $C \in \cl_\cc$ with $S \subseteq C$ we have $\cc(S) \subseteq \cc(C) = C$ by the monotonicity and idempotency of $\cc$.
	\end{proof}
	Throughout this paper we will be concerned with half-space separation of \textit{non-empty} subsets of the ground set. Proposition~\ref{Prop:Closed} below provides a necessary condition for this problem.
	Its proof is immediate from the property that $\cC$ is closed under intersection.
	\begin{Proposition}\label{Prop:Closed}
		Let $\clSystem$ be a closure system and $A,B$ be non-empty subsets of $E$ that are half-space separable in $\cC$. Then $\cc(\emptyset) = \emptyset$.
	\end{Proposition}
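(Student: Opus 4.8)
The plan is to use the one defining property of closure systems that has not yet entered any argument, namely closedness under intersection. First I would unwind the hypothesis. By the definition of half-space separability there is a half-space $H$ with $A \subseteq H$ and $B \subseteq H^c$; and by the definition of a half-space, both $H$ and its complement are closed, so $H, H^c \in \cl$. This membership is the only consequence of the hypothesis that I expect to need.

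The crux is then a one-line observation: $H \cap H^c = \emptyset$, since $H^c$ is precisely the set-theoretic complement of $H$. Applying the intersection axiom of closure systems to the two closed sets $H$ and $H^c$ immediately yields $H \cap H^c \in \cl$, that is, $\emptyset \in \cl$. Finally I would translate this back into the language of the closure operator: since $(E,\cl)$ is recovered from $\cc$ as its family of fixed points, the membership $\emptyset \in \cl$ is equivalent to $\emptyset$ being a fixed point, i.e. $\cc(\emptyset) = \emptyset$. (Equivalently, $\cc(\emptyset)$ is the intersection of all closed sets containing $\emptyset$, and once $\emptyset \in \cl$ this intersection is contained in $\emptyset$.)

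I do not expect a genuine obstacle here: the statement is a direct necessary condition, and the argument never actually invokes that $A$ and $B$ are non-empty. Non-emptiness is merely the standing assumption under which the separation problem is posed throughout the paper, and its role is only to make the condition non-vacuous rather than to drive the proof. The single point requiring any care is the bookkeeping step of passing between the set-system description $\cl$ and the operator description $\cc$, which is exactly the correspondence already recorded in the preliminaries.
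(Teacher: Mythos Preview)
Your argument is correct and is exactly the one the paper has in mind: it states only that the proof ``is immediate from the property that $\mathcal{C}$ is closed under intersection,'' and your unpacking of this into $H\cap H^c=\emptyset\in\mathcal{C}$, hence $\rho(\emptyset)=\emptyset$, is precisely that immediate argument. Your observation that non-emptiness of $A$ and $B$ plays no role in the proof is also accurate.
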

	
	%The example below shows that there are closure systems in which two disjoint convex sets cannot be separated by a half-space. 
	%	\begin{figure}[t]
	%		\centering
	%		\begin{tikzpicture}[scale = 0.5]
	%		\node (max) at (0,4) {$\underline{\{1, 2, 3\}}$};
	%		\node (a) at (-2,2) {$\underline{\{1,2\}}$};
	%		\node (b) at (0,2) {$\{1, 3\}$};
	%		\node (c) at (2,2) {$\{2, 3\}$};
	%		\node (d) at (-2,0) {$\underline{\{1\}}$};
	%		\node (e) at (0,0) {$\underline{\{2\}}$};
	%		\node (f) at (2,0) {$\{3\}$};
	%		\node (min) at (0,-2) {$\underline{\emptyset}$};
	%		\draw (min) -- (d) -- (a) -- (max) -- (b) -- (f)
	%		(e) -- (min) -- (f) -- (c) -- (max)
	%		(d) -- (b);
	%		\draw[preaction={draw=white, -,line width=6pt}] (a) -- (e) -- (c);
	%		\end{tikzpicture}
	%		
	%		\caption{\label{fig:NonSeparable}Non-separable closure system. Closed sets are marked underlined.}
	%	\end{figure}
	Notice that half-space separability in abstract closure systems does not preserve all natural properties of that in $\Rd$. For example, for any two {\em finite} subsets of $\Rd$ it always holds that they are half-space separable if and only if their convex hulls\footnote{Notice that the function mapping any subset of $\Rd$ to its convex hull is a closure operator.}  
	are disjoint. 
	In contrast, as we show in Section~\ref{sec:maximalclosed} below, this equivalence does not hold for finite closure systems in general.

%	A tuple $G=(V, K)$ consisting of a finite set $V$ and a set $K\subseteq\{x\in 2^V:|x|=2\}$ is called a graph. Elements $v\in V$ are called nodes, elements $e\in K$ are called edges. Let $e=\{v, w\}\subseteq K$. A path is sequence $P=(v_1,\ldots,v_k)$ of vertices $v_1,\ldots, v_k\in V$ such that $\{v_i, v_{i+1}\}\in E$ and $v_i\neq v_j$ for all $i\neq j$. 
%	For a path $P=(v_1,\ldots,v_k)$ we denote by $\pathendpoints(P)=\{v_1, v_k\}\subseteq V$ the endpoints of the path.
%	To see this, consider the set system $(\{1,2,3\},\cl)$ with $\cl=\{\emptyset, \{1\},\{2\},\{1,2\},\{1,2,3\}\}$. 
%	% (cf. Fig.~\ref{fig:NonSeparable}). 
%	One can easily check that it is a closure system.
%	Note that $\cl$ is non-atomic, as $\{3\}\not\in\cl$.
%	Although $\{1\}$ and $\{2\}$ are both closed and disjoint, they cannot be separated by a half-space in $\cl$ because the only half-space containing $\{1\}$ contains also $\{2\}$.
%}

	\section{Half-Space and Maximal Closed Set Separation in Closure Systems}
	\label{sec:maximalclosed}
	
	In this section we first formulate some results concerning the computational complexity of the following decision problems:
	\begin{problem}[\sc The \HSP~(HSS)~Problem] 
		\emph{Given} a closure system $(E, \cl_\cc)$ with $|E| < \infty$ via  $\cc$ and non-empty subsets $A,B \subseteq E$, \emph{decide} whether $A$ and $B$ are half-space separable in $\cl_\cc$, or not.
	\end{problem}
	
	For algebraic reasons we disregard the degenerate case of $A= \emptyset$ or $B= \emptyset$.
	Furthermore, similarly to the infinite closure system over $\Rd$ defined by the family of all convex hulls in $\Rd$, we suppose that the (abstract) closure system is given \textit{implicitly}. More precisely, we assume that $\clSystemcc$ is given by the corresponding closure operator $\rho$, which returns $\rho(X)$ for any $X \subseteq E$ in \textit{unit} time. Accordingly, we characterize the complexity of the algorithms by the number of closure operator calls they require. 
	The assumption that $\cl_\cc$ is given implicitly (or intensionally) is natural, as $|\cl_\cc|$ can be exponential in $|E|$.  

	Clearly, the solution of an instance of the HSS problem is always ``{\sc No}'' whenever $\cc(A) \cap \cc(B) \neq \emptyset$.
	However, as shown in the example below, the converse of the implication is not true, i.e., the disjointness of the closures of $A$ and $B$ does not imply their half-space separability in $\cl$.
	
	\begin{figure}[t]
		\centering
		\begin{tikzpicture}[scale = 0.75]
		\centering
		%\draw (0,0) .. controls (1,1) .. (0,2);
		\node[blue!80!black, draw, circle, fill, inner sep=1pt, label=$u$](0) at (0, 0) {};
		\node[blue!80!black, draw, circle, fill, inner sep=1pt, label=below:$v$](1) at (4, 0) {};

		\node[black, draw, circle, fill, inner sep=1pt, label=$y$](2) at (2, 1.2) {};
		\node[black, draw, circle, fill, inner sep=1pt, label=below:$w$](3) at (2, -1.2) {};
		\node[red!80!black, draw, circle, fill, inner sep=1pt, label=$x$](4) at (3.8, 0.3) {};
		
		\node[draw, circle, fill, inner sep=1pt, label=right:$z$](5) at (8, 1.2) {};
		
		%blue hull
		\draw[dashed, blue!80!black] (0)--(1);
		
		%Red hull
		\draw[dashed, red!80!black] (2)--(3);
		\draw[dashed, red!80!black] (3)--(4);
		\draw[dashed, red!80!black] (2)--(4);
		
		%Possibilities for hulls with black point
		\draw[dotted, blue!80!black] (0)--(5);
		\draw[dotted, blue!80!black] (1)--(5);
		
		\draw[dotted, red!80!black] (2)--(5);
		\draw[dotted, red!80!black] (3)--(5);
		\end{tikzpicture}
		\caption{Example of a point configuration in $\mathbb{R}^2$, where adding $z$ to either of the closed sets $\{x,y,w\}$ and $\{u,v\}$ would violate the disjointness condition.}
		\label{fig:nonkakutani}	
	\end{figure}
	
	\begin{Example}
		\label{ex:nonkakutaniA}
		%As an example of a non-Kakutani closure system, 
		Consider %again 
		the set $E \subset \mathbb{R}^2$ consisting of the six points in Fig.~\ref{fig:nonkakutani} and the $\alpha$-closure system $(E,\cC_\convexhull)$ defined in (\ref{eq:convexhull}).
		Though $\{u,v\}$ and $\{x,y,w\}$ are both closed (i.e., belong to $\cC_\convexhull$) and disjoint, they are not half-space separable in $\cC_\convexhull$, as $z$ can be added to neither of the sets without violating the disjointness property of half-space separation.
		%		It is, however, not Kakutani. Indeed, for the disjoint closed sets $A=\{x,y,w\}$ and $B=\{u,v\}$ we have that $z$ can be added neither to $A$ nor to $B$ without violating the property of half-space separation.
	\end{Example}
	
	This difference to $\Rd$ makes, among others, the more general problem setting considered in this work computationally difficult, as shown in Thm.~\ref{pr:HSP} below.	
	%, as $\cc(A)$ (resp. $\cc(B)$) are the smallest closed sets in $\cl$ containing $A$ (resp. $B$).
	The fact that the disjointness of $\cc(A)$ and $\cc(B)$ does not imply half-space separability of $A$ and $B$ makes the HSS~problem computationally intractable.
	To prove this negative complexity result, we adopt the definition of {\em convex} vertex sets of a graph defined by shortest paths~\cite{Harary/Nieminen/1981,Mulder/1980}, also referred to as the \textit{geodesic graph closure}. More precisely, for an undirected graph $G =(V, E)$ we consider the set system $(V,\cC_{\cg})$ with
	\begin{equation}
	\label{eq:clgamma}
	V'\in \cC_{\cg} \iff \forall u,v \in V', \forall P \in \mathcal{S}_{u,v} : V(P) \subseteq V'
	\end{equation}
	for all $V' \subseteq V$, where $\mathcal{S}_{u,v}$ denotes the set of shortest paths connecting $u$ and $v$ in $G$ and $V(P)$ the set of vertices in $P$.
	Notice that $(V,\cC_{\cg})$ is a closure system. This follows directly from the fact that the intersection of any two convex subsets of $V$ is also convex, by noting that the empty set is also convex by definition. This type of closure systems will be referred to as \textit{$\cg$-closure systems} throughout this paper. Using the above definition of graph convexity, we consider the following problem:
	
	\begin{description}
		\item[\sc Convex 2-Partitioning Problem:] {\em Given} an undirected graph $G =(V, E)$, {\em decide} whether there is a {\em proper} partitioning of $V$ into two convex sets.
	\end{description}
	This problem is known to be NP-complete~\cite{ARTIGAS20111968}.
	Notice that the condition on properness is necessary, as otherwise $\emptyset$ and $V$ would always form a (trivial) solution.
	Note also the difference between the HSS and the {\sc Convex 2-Partitioning} problems. The latter one is concerned with a property of $G$ (i.e., has no additional input $A,B$). 
	The concepts and result above imply the following negative result (see Appendix~\ref{app:A} for the proof):	
	\begin{Theorem}
		\label{pr:HSP}
		The {\sc HSS}~problem is $\operatorname{NP}$-hard.
	\end{Theorem}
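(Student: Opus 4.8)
The plan is to establish the two halves of NP-completeness separately: membership in NP and NP-hardness. For membership, I would take as a certificate a subset $H \subseteq E$ and verify in polynomial time that it witnesses a half-space separation of $A$ and $B$. Concretely, one checks the inclusions $A \subseteq H$ and $B \subseteq H^c$ by direct set comparison, and confirms that both $H$ and $H^c$ are closed by the two oracle calls $\cc(H) = H$ and $\cc(H^c) = H^c$. Since each closure operator call costs unit time and the certificate has size at most $|E|$, this verification is polynomial, so the {\sc HSS} problem lies in NP.

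For hardness I would reduce from the {\sc Convex 2-Partitioning} problem. Given an undirected graph $G = (V, E)$, I would attach to it the $\cg$-closure system $\vclog$ defined in (\ref{eq:clgamma}), observing that its closure operator is computable in polynomial time: starting from a vertex set, one repeatedly adds all vertices lying on a shortest path between two of its current members until a fixpoint is reached. Hence every instance produced below is a legitimate {\sc HSS} input. The pivotal observation is that a proper partition of $V$ into two convex sets is exactly a pair $H, H^c$ with $\emptyset \neq H \neq V$ and $H, H^c \in \cC_\cg$, i.e., a nontrivial half-space of $\vclog$.

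The remaining gap is that {\sc Convex 2-Partitioning} asks only for the \emph{existence} of such a partition, whereas an {\sc HSS} instance must be equipped with prescribed seed sets $A$ and $B$. I would bridge this as follows. Fix an arbitrary vertex $v_0 \in V$ and, for each $u \in V \setminus \{v_0\}$, form the {\sc HSS} instance with closure system $\vclog$ and inputs $A = \{v_0\}$ and $B = \{u\}$. I then claim that $G$ admits a proper convex $2$-partition if and only if at least one of these $|V| - 1$ instances is half-space separable. The ``if'' direction is immediate, since a half-space separating $\{v_0\}$ and $\{u\}$ is a nontrivial convex bipartition. For ``only if'', any proper partition $P, Q$ places $v_0$ in one part, say $P$; since the other part is non-empty we may pick $u \in Q$, and because $P^c = Q$ is a half-space whenever $P$ is, the pair $P, P^c$ separates $\{v_0\}$ from $\{u\}$ (the case $v_0 \in Q$ is symmetric). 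Consequently, a polynomial-time decision procedure for {\sc HSS} would decide {\sc Convex 2-Partitioning} using only $|V| - 1$ calls, and since the latter problem is NP-complete, the {\sc HSS} problem is NP-hard; together with membership this yields NP-completeness.

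I expect the main obstacle to be precisely this last reconciliation: the seed-free, purely existential nature of {\sc Convex 2-Partitioning} must be matched against the seed-dependent formulation of {\sc HSS}. The resolution I would rely on is the exact correspondence between nontrivial half-spaces and proper convex bipartitions, combined with the fact that $H^c$ is a half-space whenever $H$ is, which lets me fix a single seed $v_0$ and range over the polynomially many choices of its counterpart rather than attempting to encode the existential quantifier inside one instance.
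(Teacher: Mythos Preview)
Your proposal is correct and follows essentially the same approach as the paper: membership in NP via a half-space certificate, and NP-hardness by reduction from {\sc Convex 2-Partitioning} through polynomially many HSS instances with singleton seed sets, using the identification of proper convex $2$-partitions with nontrivial half-spaces in $\vclog$. The only difference is cosmetic: the paper ranges over all $\binom{|V|}{2}$ pairs $\{u\},\{v\}$, whereas you fix one seed $v_0$ and vary the other, which sharpens the number of HSS calls from quadratic to linear in $|V|$ but does not change the argument's structure.
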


We will refer to the problem defined in the same way as the {\sc HSS}~problem, but with the difference that $\cc$ can be computed in time $\bigO{p(|V|)}$-time for some  polynomial $p$ as the {\sc HSSPOLY}-problem.
Thm.~\ref{pr:HSP} immediately implies the following negative results:
	\begin{Corollary}The {\sc HSSPOLY} problem is NP-complete.
	\end{Corollary}
%	\begin{proof}
%		The closure operator $\cg$ on graphs can be computed in polynomial time. Hence {\sc HSSPOLY} can be reduced to the \textsc{Convex 2-Partitioning} problem like {\sc HSS} as shown above. Moreover, the problem {\sc HSSPOLY} is in $\operatorname{NP}$ because 
%		for any $A,B,H \subseteq E$, one can verify by definition in $\bigO{p(|E|)}$-time, whether $H$ and $H^c$ form a half-space separation of $A$ and $B$ in $\cl$, or not.
%	\end{proof}
The next corollary is concerned with the complexity of computing a closed set separation of \textit{maximum} size. (Note that the case of $k = |E|$ in the corollary below corresponds to the {\sc HSS} problem.)
	\begin{Corollary}
		\label{cor:MS}
		Given a closure system $(E, \cl_\cc)$ via $\cc$ as in the HSS~problem, non-empty subsets $A,B \subseteq E$, and an integer $k > 0$, it is NP-hard to decide whether there are disjoint closed sets $H_1,H_2 \in \cl_\cc$ with $A \subseteq H_1$, $B \subseteq H_2$ such that $|H_1| + |H_2| \geq k$.
		%	The {\sc Maximum Closed Set Separation} problem is $\operatorname{NP}$-complete.
	\end{Corollary}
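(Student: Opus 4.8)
The plan is to establish membership in NP directly and then prove NP-hardness by an essentially trivial reduction from the {\sc HSS} problem, exploiting the observation already recorded in the statement that the case $k = |E|$ coincides with half-space separation.

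For membership in NP, I would take a candidate pair $H_1,H_2 \subseteq E$ as a certificate and verify in time linear in $|E|$, using two closure operator calls, that $H_1 = \cc(H_1)$ and $H_2 = \cc(H_2)$ (closedness), that $H_1 \cap H_2 = \emptyset$, that $A \subseteq H_1$ and $B \subseteq H_2$, and finally that $|H_1| + |H_2| \geq k$. Each check is polynomial, so the problem lies in NP.

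For hardness, I would reduce from the {\sc HSS} problem, which is NP-complete by Theorem~\ref{pr:HSP}. Given an {\sc HSS} instance consisting of a closure system $(E,\cl_\cc)$ and non-empty sets $A,B$, I map it to the instance of the present problem with the \emph{same} closure system and input sets and with $k := |E|$; this is clearly computable in polynomial time. The key claim to verify is that $A$ and $B$ are half-space separable in $\cl_\cc$ if and only if there exist disjoint closed sets $H_1,H_2$ with $A \subseteq H_1$, $B \subseteq H_2$, and $|H_1| + |H_2| \geq |E|$. For the forward direction, a half-space $H$ separating $A$ and $B$ yields $H_1 := H$ and $H_2 := H^c$, which are disjoint closed sets with $|H_1| + |H_2| = |E|$. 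For the converse, since $H_1$ and $H_2$ are disjoint subsets of $E$ we always have $|H_1| + |H_2| \le |E|$; together with the required bound $|H_1| + |H_2| \ge |E|$ this forces $|H_1| + |H_2| = |E|$, so $H_1$ and $H_2$ partition $E$, i.e. $H_2 = H_1^c$. Hence $H_1$ and its complement $H_2$ are both closed, so $H_1$ is a half-space separating $A$ and $B$.

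I expect no serious obstacle: the reduction is the identity on the closure system and the input sets and merely pins $k = |E|$, so the only point needing care is the pigeonhole-style argument that disjointness combined with $|H_1| + |H_2| \ge |E|$ forces complementarity. Since the reduction is polynomial and the equivalence holds, NP-hardness follows, and combined with membership in NP this proves the corollary.
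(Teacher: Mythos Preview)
Your proposal is correct and follows essentially the same approach as the paper: the paper simply remarks that the case $k = |E|$ coincides with the {\sc HSS} problem and declares the corollary an immediate consequence of Theorem~\ref{pr:HSP}, which is precisely the reduction you spell out in detail.
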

	%
	%	Furthermore, we can ask for the input $(E,\cl_\cc), A, B$ of the HSS~problem if there exist disjoint closed sets $H_1,H_2 \in \cl_\cc$ with $A \subseteq H_1$ and $B \subseteq H_2$ of {\em maximum} combined cardinality (i.e., there are no disjoint closed sets $H_1',H_2' \in \cl_\cc$ with $A \subseteq H_1'$ and $B \subseteq H_2'$ such that $|H_1| + |H_2| < |H_1'| + |H_2'|$). More precisely, we are interested in the following problem:	
	%
	%\begin{description}
	%	\item[\sc Maximum Closed Set Separation Problem:] 
	%	\emph{Given} (i) a closure system $(E, \cl_\cc)$ as in the HSS~problem definition, (ii) subsets $A,B \subseteq E$, and (iii) an integer $k > 0$, {\em decide} whether there are disjoint closed sets $H_1,H_2 \in \cl_\cc$ with $A \subseteq H_1$, $B \subseteq H_2$ such that $|H_1| + |H_2| \geq k$.
	%\end{description}
	%Corollary~\ref{cor:MS} below is an immediate implication of Thm.~\ref{pr:HSP}.
	
	The negative results above motivate us to relax the {\sc HSS} problem.
	
	\subsection{Maximal Closed Set Separation}
	\label{sec:MCSep}
	One way to overcome the negative results formulated in Thm.~\ref{pr:HSP} and Corollary~\ref{cor:MS} is to weaken the condition on half-space separability in the HSS~problem to the problem of {\em maximal} closed set separation:
	
	\begin{description}
		\item[\sc Maximal Closed Set Separation (\MCSep) Problem:] 
		\emph{Given} a closure system $(E, \cl_\cc)$ via $\cc$ as in the HSS~problem and non-empty subsets $A,B \subseteq E$, \textit{find} two disjoint closed sets $H_1,H_2 \in \cl_\cc$ with $A\subseteq H_1$ and $B\subseteq H_2$ such that there are no disjoint sets $H_1', H_2'\in \cl_\cc$ with $H_1\subseteq H_1'$ and $H_2\subseteq H_2'$, where at least one of the containments is proper; o/w return ``NO'' (i.e., if such closed sets do not exist).
	\end{description}
	
\begin{algorithm}[t]
	\SetAlgoLined
	\caption{\sc Maximal Closed Set Separation (\AlgoSep)}
	\DontPrintSemicolon
	\LinesNumbered
	\label{AlgoGreedyHalfSep}
	\KwIn{finite closure system $(E,\cl_\cc)$ given by $\cc$ and $A,B \subseteq E$ with $A,B \neq \emptyset$}
	\KwOut{{\em maximal} disjoint closed sets $H_1,H_2 \in \cl_\cc$ with $A  \subseteq H_1$ and $B \subseteq H_2$ if $\cc(A) \cap \cc(B) = \emptyset$; ``{\sc No}'' o/w}
	\BlankLine
	%			\Function{Greedy}{E, X, Y}
	$H_1\leftarrow \cc(A)$, $H_2\leftarrow \cc(B)$\;
	\If{$H_1 \cap H_2 \neq \emptyset$}{\Return{``{\sc No}''}}
	$F\leftarrow E\setminus (H_1\cup H_2)$\;
	\While{$F\not = \emptyset$}{
		{\label{line:chooseElements}}choose $e\in F$ and remove it from $F$\;
		\uIf{$\cc(H_1\cup\{e\})\cap H_2 = \emptyset$}
		{\label{LineStatement1}
			$H_1\leftarrow \cc(H_1\cup \{e\})$, \label{LineH1Set}
			$F\leftarrow F\setminus H_1$\label{LineFReduce1}}	
		\ElseIf{ $\cc(H_2\cup\{e\})\cap H_1 = \emptyset$}
		{\label{LineStatement2}
			$H_2\leftarrow \cc(H_2\cup\{e\})$, \label{LineH2Set}
			$F\leftarrow F\setminus H_2$\label{LineFReduce2}}
		
	}
	\Return{$H_1, H_2$}\;
\end{algorithm}
	
	In this section we present Alg.~\ref{AlgoGreedyHalfSep}, that solves the \MCSep~problem and is \textit{optimal} w.r.t. the worst-case number of closure operator calls. 
	Alg.~\ref{AlgoGreedyHalfSep} takes as input a closure system $(E,\cl_\cc$) over some finite ground set $E$,  where $\cl_\cc$ is given via the closure operator $\cc$, and non-empty sets $A,B \subseteq E$. If the closures of $A$ and $B$ are not disjoint, then it returns ``NO'' (cf. Lines~1--3). Otherwise, the algorithm tries to extend one of the largest closed sets $H_1 \supseteq A$ and $H_2 \supseteq B$ found so far consistently by an element $e\in F$, where $F= E\setminus (H_1\cup H_2)$ is the set of potential generators. By consistency we mean that the closure of the extended set must be disjoint with the other (unextended) one (cf. Lines~8 and 10). 
	Note that each element will be considered at most once for extension (cf. Line~7). If $H_1$ or $H_2$ could be extended, then $F$ will correspondingly be updated (cf. Lines~9 and~11), by noting that $e$ will be removed from $F$ even in the case it does not result in an extension (cf. Line~7). The algorithm repeatedly iterates the above steps until $F$ becomes empty; at this stage it returns $H_1$ and $H_2$ as a solution.
	In the theorem below concerning some basic properties of Alg.~\ref{AlgoGreedyHalfSep}, $\gen(C,X)$ denotes the cardinality of a \textit{smallest} relativ generator set $G \subseteq E$ such that $\cc(X\cup G)=C$, for $C \in \cC$ and $X \subseteq C$.
	
	\begin{Theorem}\label{thm:correctness}
		Alg.~\ref{AlgoGreedyHalfSep} is correct. Furthermore, for the number $N$ of its closure operator calls it holds that $N=2$ if $\cc(A) \cap \cc(B) \neq \emptyset$; o/w 
		\begin{equation}
			\gen(E, A\cup B) +2 \leq N \leq 2|E| - 2 \enspace . \label{eq:bounds}
		\end{equation}
	\end{Theorem} 
\begin{proof}
The correctness is straightforward by noting that the maximality of the output closed sets $H_1$ and $H_2$ follows from the monotonicity of $\cc$, as all elements $e$ considered by the algorithm and not added earlier to one of the closed sets (cf. Lines~9 and 11) can be added later neither to $H_1$ nor to $H_2$ without violating the disjointness.

Regarding the second part of the claim, it is trivial for the case that $\cc(A) \cap \cc(B)\neq \emptyset$, so assume $\cc(A) \cap \cc(B)= \emptyset$.
%	Alg.~\ref{AlgoGreedyHalfSep} first calls the closure operator twice to calculate $\cc(A)$ and $\cc(B)$. 
%	Then we call the closure operator on elements we add to $\cc(A)$ resp. $\cc(B)$ and at least once for each element we cannot add to one of the disjoint closed sets. 
%Let $X=E\setminus(H_1\cup H_2)$.
%$C_A, C_B$ be the output disjoint closed sets of the algorithm with $A\subseteq C_A, B\subseteq C_B$ and $m$ be the cardinality of be the set of all non addable elements, $m=|X|$.  
For the lower bound in (\ref{eq:bounds}), note that 
%Using the above the overall number of closure operator calls $n$ can be bounded by 
\begin{align}\label{eq:LowerBoundBasis}
	N\geq 2 + \gen(H_1, A) + \gen(H_2, B) + m \enspace ,
\end{align}
where $m = |E \setminus (H_1\cup H_2)|$.
%because the number of minimum relative generators is by definition a lower bound of elements we have added to $\cc(A)$ respectively $\cc(B)$. 
We  claim that
\begin{align}\label{eq:LowerBoundGenerator}
	\gen(H_1, A) + \gen(H_2, B) \geq \gen(\cc(H_1\cup H_2), A\cup B) \enspace.
\end{align}
%i.e., the number of generators needed to generate $\cc(C_A\cup C_B)$ is smaller than the sum of the numbers of generators needed to generate $C_A, C_B$. 
Indeed, by definition there are  $G_1,G_2 \subseteq E$ with $|G_1| = \gen(H_1,A)$ and $|G_2| = \gen(H_2,B)$ such that $H_1=\cc(A\cup G_1)$ and $H_2=\cc(B\cup G_2)$. Moreover, 
\begin{equation*}
	\cc(H_1 \cup H_2) = \cc(\cc(A\cup G_1)\cup\cc(B\cup G_2)) \subseteq 
	\cc(A \cup B \cup G_1 \cup G_2)
%	\supseteq \cc(A\cup G_1)\cup\cc(B\cup G_2)\\
%	=H_1\cup H_2 \enspace .
\end{equation*}
%and
%$$
%	\cc(H_1\cup H_2) 
%	\subseteq \cc(\cc(A\cup B\cup\{e_1,\ldots,e_k,f_1,\ldots,f_l\}))
%	=\cc(A\cup B\cup\{e_1,\ldots,e_k,f_1,\ldots,f_l\})
%$$
implying $|G_1 \cup G_2| \geq\gen(\cc(H_1\cup H_2),A\cup B)$, from which (\ref{eq:LowerBoundGenerator}) follows by $|G_1| + |G_2| \geq |G_1 \cup G_2|$.
We now show that
\begin{align}\label{eq:LowerBoundE}
	\gen(\cc(H_1\cup H_2), A\cup B) + m \geq\gen(E, A\cup B).
\end{align}
By definition, there exists $G \subseteq E$ with $|G| = \gen(\cc(H_1\cup H_2), A\cup B)$ such that $\cc(H_1\cup H_2) = \cc(A\cup B\cup G)$.
But then, for $X=E\setminus(H_1\cup H_2)$ we have
\begin{align*}
	E 
	&= H_1\cup H_2\cup X \\
	&\subseteq \cc(H_1\cup H_2)\cup X \\
	& = \cc(A\cup B\cup G) \cup X \\
	&\subseteq \cc(A\cup B\cup G\cup X)
\end{align*}
from which we have (\ref{eq:LowerBoundE}) by $|X|=m$.
The lower bound in (\ref{eq:bounds}) then follows from (\ref{eq:LowerBoundBasis})--(\ref{eq:LowerBoundE}).

		Regarding the upper bound in (\ref{eq:bounds}), the algorithm calls initially the closure operator twice (cf. Line~1) and then at most twice per iteration (cf. Lines~9 and 11), giving 
		$$
		 N \leq 2\cdot|E\setminus(\cc(A)\cup\cc(B))| + 2 \enspace .
		$$
		The claim then follows from the case that $A$ and $B$ are closed singletons. 
	\end{proof}
	We stress that Alg.~\ref{AlgoGreedyHalfSep} has access to $\clSystemcc$ only via $\cc$, i.e., it does not utilize any domain specific properties. 
	The following example shows that, under this assumption, the number of closure operator calls depends on the order of $A$ and $B$ as well as on that of the elements selected in Line~7.
	
	\begin{Example} 
		\label{ex:greedyworstcase}
		Let $\clSystemcc$ be the closure system with $E = \{1, 2, \ldots, n\}$ for some integer $n > 1$ and with the corresponding closure operator defined by $\cc: X \mapsto \{x \in E: \min X \leq x \leq \max X \}$ for all $X \subseteq E$.
		Consider first the case that $A = \{2\}$, $B = \{1\}$, and $n$ has been chosen in Line~7 for the first iteration. For this case, the algorithm terminates after the first iteration returning the closed half-spaces $H_1 = \{2,\ldots,n\}$ and $H_2 = \{1\}$, and  calling the closure operator together three times. 
		
		Now consider the case that $A = \{1\}$, $B = \{2\}$, and the elements in Line~7 are processed in the order $3,4,\ldots,n$. One can easily check that the algorithm returns the same two half-spaces after $n-2$ iterations. 
		In each iteration it calls the closure operator twice, giving together the worst-case upper bound $2n-2$ claimed in Thm.~\ref{thm:correctness}.  
	\end{Example}
	
	Though the example above may suggest that Alg.~\ref{AlgoGreedyHalfSep} is not optimal, the worst-case upper bound stated in Thm.~\ref{thm:correctness} is in fact the best possible, regardless of the order of the elements in Line~7 for the general case stated above. Furthermore, the example indicates that using domain specific structure might help to reduce the number of closure operator calls drastically. This is verified in case of lattices in Section~\ref{sec:Lattices}. 
	To show optimality in the general case, we first prove the following lemma.
	
	\begin{lemma}\label{lm:optimal}
		All algorithms solving the \MCSep~problem require at least $2|E|-2$ closure operator calls in the worst-case.
	\end{lemma}
	\begin{proof}
		Suppose for contradiction that there is an %deterministic 
		algorithm $\mathfrak{A}$ solving the \MCSep ~\-problem with strictly less than $2|E|-2$ closure operator calls for \textit{all} problem instances.
		Consider the closure system $(E, \cl_\cc)$ with $E=\{e_1,\ldots, e_n\}$ for some $n>2$ and with the closure operator $\rho$ defined by
		$$
		\rho(X) =
		\begin{cases}
		X & \text{if $X \in \{ \emptyset, \{e_1\}, \{e_2\}\}$} \\
		E & \text{o/w .}	
		\end{cases}
		$$
		By condition, $\mathfrak{A}$ returns the only solution $\{e_1\}$ and $\{e_2\}$ of the \MCSep~problem for the input $\{e_1\}$ and $\{e_2\}$ with at most $2|E|-3$ closure operator calls.
		We claim that $\mathfrak{A}$ needs to calculate the closure for both input sets. Indeed, suppose the closure of one of them, say $\{e_1\}$, has not been computed. Then $\mathfrak{A}$ is incorrect, as it would return the same output for the closure system above and for $(E, \cl_\cc\setminus\{e_1\})$.
		Thus, $\mathfrak{A}$ can calculate the closure for at most $2|E|-5$ further subsets of $E$.
		This implies that the closure has not been considered by $\mathfrak{A}$ for at least one of the sets $\{e_1, e_3\}, \ldots, \{e_1, e_n\},\{e_2, e_3\},\ldots,\{e_2, e_n\}$, say for $\{e_1, e_3\}$. 
		But then %, since it is deterministic, 
		$\mathfrak{A}$ returns the same output for $(E, \cl_\cc)$ and for the closure system $(E, \cl_\cc \cup \{e_1, e_3\})$, contradicting its correctness.
	\end{proof}
	
	\begin{Theorem}\label{thm:optimal} Alg.~\ref{AlgoGreedyHalfSep} is optimal w.r.t.~the worst-case number of closure operator calls.
	\end{Theorem}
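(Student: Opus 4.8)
The plan is to obtain the statement immediately by sandwiching the worst-case cost of Algorithm~\ref{AlgoGreedyHalfSep} between the upper bound already established in Theorem~\ref{thm:correctness} and the universal lower bound of Lemma~\ref{lm:optimal}. First I would recall from Theorem~\ref{thm:correctness} that on \emph{every} instance of the \MCSep~problem, Algorithm~\ref{AlgoGreedyHalfSep} issues at most $2|E|-2$ closure operator calls; hence the worst-case number of calls made by the algorithm is bounded above by $2|E|-2$. Conversely, Lemma~\ref{lm:optimal} asserts that \emph{no} correct algorithm for the \MCSep~problem can keep its worst-case number of closure operator calls strictly below $2|E|-2$; that is, for any correct algorithm there is an instance forcing at least $2|E|-2$ calls. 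Consequently the minimum achievable worst-case cost over all correct algorithms is exactly $2|E|-2$, and since Algorithm~\ref{AlgoGreedyHalfSep} attains this value it is optimal in the stated sense.

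The only point deserving care is to confirm that the two bounds are measured in the same cost model, namely the number of invocations of the closure operator $\cc$ viewed as a unit-time oracle, which is precisely the complexity measure fixed for both Theorem~\ref{thm:correctness} and Lemma~\ref{lm:optimal}. With that alignment in place, the two inequalities refer to the same quantity and compose without further work. I would also invoke Example~\ref{ex:greedyworstcase}, which exhibits an instance on which Algorithm~\ref{AlgoGreedyHalfSep} genuinely performs $2n-2 = 2|E|-2$ calls; this certifies that the upper bound of Theorem~\ref{thm:correctness} is attained, so the algorithm's worst-case cost is \emph{exactly} $2|E|-2$ rather than merely bounded by it.

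I do not expect any real obstacle here: all the substance has already been discharged in Lemma~\ref{lm:optimal}, whose adversarial argument constructs the two indistinguishable closure systems that defeat any algorithm making too few calls. The present theorem is therefore a one-line corollary of the matching upper and lower bounds, and the proof reduces to stating this combination explicitly.
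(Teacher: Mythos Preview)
Your proposal is correct and follows exactly the paper's approach: the paper's proof consists of the single sentence ``It is immediate from Theorem~\ref{thm:correctness} and Lemma~\ref{lm:optimal},'' which is precisely the sandwich between the $2|E|-2$ upper bound and the matching lower bound that you spell out. Your additional invocation of Example~\ref{ex:greedyworstcase} to witness tightness is a harmless elaboration but not strictly needed, since Lemma~\ref{lm:optimal} already forces every correct algorithm (including Algorithm~\ref{AlgoGreedyHalfSep}) to hit $2|E|-2$ on some instance.
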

	\begin{proof}
		It is immediate from the upper bound in Thm.~\ref{thm:correctness} and Lemma~\ref{lm:optimal}.
	\end{proof}

	As a second relaxation besides maximal disjoint closed sets, in Section~\ref{sec:kakutani} we consider \textit{Kakutani} closure systems, a special kind of closure systems, for which a half-space separation always exists if the closures of the input sets are disjoint.
	We will show that for this type of closure systems, Alg.~\ref{AlgoGreedyHalfSep} provides an algorithmic characterization and solves the HSS~problem correctly and efficiently. 
	
	\section{Kakutani Closure Systems}
	\label{sec:kakutani}
	A natural way to overcome the negative result stated in Thm.~\ref{pr:HSP} is to consider closure systems in which {\em any} two disjoint closed sets are half-space separable. 
	More precisely, for a closure operator $\clop$ over a ground set $E$, the corresponding closure system $(E,\cl_\clop)$ is {\em Kakutani}\footnote{A similar property was considered by the Japanese mathematician Shizou Kakutani for Euclidean spaces cf.~\cite{Kakutani37}} if it fulfills the {\em $S_4$ separation axiom} defined as follows: For all $A,B \subseteq E$, 
	$$
	\text{$A$ and $B$ are half-space separable in $(E,\cl_\clop)$ $\iff$ $\clop(A)\cap \clop(B)=\emptyset$} \enspace .
	$$
	For a detailed reference on closure systems satisfying the $S_4$ separation property, the reader is referred to \cite{Chepoi1994}.
	By Proposition~\ref{pr:halfspacesep}, any half-space separation of $A,B$ in $\cl_\clop$ is a half-space separation of $\clop(A)$ and $\clop(B)$ in $\cl_\clop$.  
	Clearly, the HSSPOLY~problem can be decided in polynomial time for Kakutani closure systems: For any $A,B \subseteq E$ just calculate $\clop(A)$ and $\clop(B)$ and check whether they are disjoint, or not.
	Furthermore, if $A$ and $B$ are half-space separable, $\clSystemcc$ is Kakutani, and $\cc$ can be computed in time polynomial in $n$, then Alg.~\ref{AlgoGreedyHalfSep} returns a half-space separation of $A, B$ in polynomial time. %, more precisely if the time complexity to calculate $\cc$ is $\bigO{n^k}$ then the algorithm returns the half-spaces in $\bigO{n^{k + 1}}$. 
	
	% (We recall that all closure systems $(E,\cl)$ considered in this work are finite.)
	
	\begin{Example}
		\label{ExampleNonKakutani} 
		The closure system used in Example~\ref{ex:nonkakutaniA} is \textit{not} Kakutani.
		For an example of Kakutani closure systems, consider an arbitrary non-empty finite subset $E \subset \mathbb{R}^2$ of a circle and define the set system $\cC \subseteq 2^E$ as follows: For all $E' \subseteq E$, $E' \in \cC$ if and only if there exits a closed half-plane $H \subseteq \mathbb{R}^2$ satisfying $E' = H \cap E$. One can easily check that $(E,\cC)$ is a Kakutani closure system.		
		%		
		%		As an example of a non-Kakutani closure system, consider again the set $E \subset \mathbb{R}^2$ consisting of the seven points in Fig.~\ref{fig:nonkakutani} and define $\cC \subseteq 2^E$ by $\cC = \{X \subseteq E: \conv{X}\cap E = X\}$, where $\conv{X} \subset \mathbb{R}^2$ denotes the convex hull of $X$. Clearly, $(E,\cC)$ is a closure system.  
		%		It is, however, not Kakutani. Indeed, for the disjoint closed sets $A=\{x,y,w\}$ and $B=\{u,v\}$ we have that $z$ can be added neither to $A$ nor to $B$ without violating the property of half-space separation.
		\qed
	\end{Example}

	In the theorem below we show that Alg.~\ref{AlgoGreedyHalfSep}, besides solving the \MCSep~problem, also provides an {\em algorithmic characterization} of Kakutani closure systems. 
	
	\begin{Theorem}
		\label{thm:characterization}
		Let $(E, \cl_\cc)$ be a closure system with corresponding closure operator $\cc$.
		Then $(E, \cl_\cc)$ is Kakutani if and only if  for all  non-empty $A,B \subseteq E$ with $\cc(A)\cap \cc(B)=\emptyset$, the output of Alg.~\ref{AlgoGreedyHalfSep} is a partitioning of $E$. 
	\end{Theorem}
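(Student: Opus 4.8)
The plan is to first translate the statement about Algorithm~\ref{AlgoGreedyHalfSep} into a statement about half-spaces. For non-empty $A,B$ with $\cc(A)\cap\cc(B)=\emptyset$ the algorithm returns two disjoint closed sets $H_1\supseteq A$ and $H_2\supseteq B$, so ``the output is a partitioning of $E$'' means exactly $H_1\cup H_2=E$, i.e.\ $H_2=H_1^c$, and hence $H_1$ is a half-space separating $A$ and $B$. I will also use throughout the implication that holds in \emph{every} closure system: if $A,B$ are half-space separable by $H$, then $\cc(A)\subseteq\cc(H)=H$ and $\cc(B)\subseteq H^c$ by monotonicity and closedness, so $\cc(A)\cap\cc(B)=\emptyset$; this is the ``easy'' half of the $S_4$ axiom.

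For the ``only if'' direction, I assume $(E,\cl_\cc)$ is Kakutani, fix non-empty $A,B$ with $\cc(A)\cap\cc(B)=\emptyset$, and let $H_1,H_2$ be the output of Algorithm~\ref{AlgoGreedyHalfSep}. Since $H_1,H_2$ are disjoint \emph{closed} sets (their closures are $H_1,H_2$ themselves), the $S_4$ axiom applied to them produces a half-space $H$ with $H_1\subseteq H$ and $H_2\subseteq H^c$. The heart of the argument is to show $H_1\cup H_2=E$: if some $e\in E\setminus(H_1\cup H_2)$ survived, then $e\in H$ or $e\in H^c$; in the first case monotonicity and idempotency give $\cc(H_1\cup\{e\})\subseteq\cc(H)=H$, whence $\cc(H_1\cup\{e\})\cap H_2=\emptyset$, and symmetrically in the second case $\cc(H_2\cup\{e\})\cap H_1=\emptyset$. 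Either way $e$ could still be consistently added to one of the two sets, contradicting the maximality of the output guaranteed by Theorem~\ref{thm:correctness}. Hence $H_1\cup H_2=E$ and the output is a partitioning.

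For the ``if'' direction, I assume the partitioning property and verify the $S_4$ axiom. The forward implication of $S_4$ is the easy half recalled above, so it remains to show that $\cc(A)\cap\cc(B)=\emptyset$ implies half-space separability. For non-empty $A,B$ this is immediate from the hypothesis and the reformulation in the first paragraph: the algorithm outputs a partition $H_1,H_2=H_1^c$, and $H_1$ is a separating half-space. The only fiddly point is the degenerate case where $A$ or $B$ is empty, since the algorithm is stated for non-empty inputs. I would dispose of it using the fact (Proposition~\ref{Prop:Closed}, via closure under intersection) that a half-space exists at all only if $\cc(\emptyset)=\emptyset$: when $\cc(\emptyset)=\emptyset$ the empty set is itself a half-space and an empty input is trivially separable with disjoint closures, while when $\cc(\emptyset)\neq\emptyset$ there are no half-spaces and no non-empty pair has disjoint closures, so the $S_4$ equivalence holds vacuously on both sides.

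The main obstacle is the maximality step in the ``only if'' direction: the essential idea is that Kakutani-ness lets us \emph{certify}, via a genuine separating half-space $H$, that every leftover element lies on one definite side, so that maximality of the greedy output is incompatible with any element being left out. Everything else is bookkeeping with monotonicity, idempotency, and the easy half of the $S_4$ axiom.
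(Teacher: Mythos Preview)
Your proof is correct and takes essentially the same approach as the paper: both directions hinge on the same idea that a Kakutani half-space $H\supseteq H_1$, $H^c\supseteq H_2$ certifies that any leftover element can be consistently absorbed into one side, contradicting maximality. The only differences are cosmetic---the paper phrases the ``only if'' direction step-by-step (each $e$ selected in Line~7 gets added) rather than via maximality of the final output, and it does not spell out the degenerate empty-input case for the ``if'' direction that you handle carefully.
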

	
	\begin{proof}
		The sufficiency is immediate by Thm.~\ref{thm:correctness} and the definition of Kakutani closure systems.
		For the necessity, let $(E, \cl_\cc)$ be a Kakutani closure system.
		It suffices to show that for all $e \in F$ selected in Line~7 of Alg.~\ref{AlgoGreedyHalfSep}, $e$ is always added to one of $H_1$ or $H_2$; the claim then follows by Thm.~\ref{thm:correctness} for this direction. Suppose for contradiction that there exists an $e \in F$ selected in Line~7 that can be used to extend neither of the closed sets $H_1,H_2$. Since $H_1$ and $H_2$ are disjoint closed sets and $(E,\cl_\cc)$ is Kakutani, there are $H_1',H_2' \in \cl_\cc$ such that $H_1 \subseteq H_1'$, $H_2 \subseteq H_2'$, and $H_2' = (H_1')^c$.
		Hence, exactly one of $H_1'$ and $H_2'$ contains $e$, say $H_1'$.
		By the choice of $e$, $\cc(H_1 \cup\{e\}) \cap H_2 \neq \emptyset$. Since $\cc$ is monotone, $\cc(H_1 \cup\{e\}) \subseteq H_1'$ and hence $H_1'$ and $H_2'$ are not disjoint; a contradiction.	
	\end{proof}
	
	The characterization result formulated in Thm.~\ref{thm:characterization} cannot, however, be used to decide in time polynomial in $|E|$, whether an \textit{intensionally} given closure system $(E,\cl_\cc)$ is Kakutani, or not. More precisely, in Thm.~\ref{thm:kakutaninegative} below we have a negative result for the following decision problem:
	\begin{description}
		\item[\sc Kakutani Problem:] {\em Given} a closure system $(E,\cl_\cc)$, where $\cl_\cc$ is given by the corresponding closure operator $\cc$, {\em decide} whether $(E,\cl_\cc)$ is Kakutani, or not.
	\end{description}
	\begin{Theorem}
		\label{thm:kakutaninegative}
		Any algorithm solving the Kakutani problem above requires $\Omega\left(2^{|E|/2}\right)$ closure operator calls.
	\end{Theorem}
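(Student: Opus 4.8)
The plan is to prove the lower bound by an \emph{indistinguishability} (adversary) argument of the same flavour as the proof of Lemma~\ref{lm:optimal}, but amplified so that exponentially many single-oracle-value modifications are needed. Concretely, I would exhibit one Kakutani closure system $(E,\cl^*)$ together with a family $\{(E,\cl_T)\}_T$ of $2^{|E|/2}$ \emph{non}-Kakutani closure systems such that (i) each $\cl_T$ differs from $\cl^*$ in the value of the closure operator on \emph{exactly one} subset $X_T \subseteq E$, and (ii) the sets $X_T$ are pairwise distinct. Given this, running a purportedly correct algorithm $\mathfrak{A}$ on the Kakutani system $\cl^*$ forces it to query every $X_T$: if some $X_{T_0}$ were never queried, then $\mathfrak{A}$ would receive identical oracle answers on $\cl^*$ and on $\cl_{T_0}$, produce the same (``Kakutani'') output, and hence be wrong on the non-Kakutani system $\cl_{T_0}$. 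Thus $\mathfrak{A}$ makes at least $2^{|E|/2}$ closure operator calls on input $\cl^*$, giving the claimed $\Omega(2^{|E|/2})$ bound.

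For the construction I would assume $|E| = 2m$ and fix a partition of $E$ into $m$ pairs $P_1,\dots,P_m$. Call $X \subseteq E$ a \emph{partial transversal} if $|X \cap P_i| \leq 1$ for all $i$, and a (full) \emph{transversal} if $|X \cap P_i| = 1$ for all $i$; there are exactly $2^m = 2^{|E|/2}$ transversals. I would take the closed sets of $\cl^*$ to be all partial transversals together with $E$. This is a closure system (partial transversals are closed under intersection), and its nontrivial half-spaces are precisely the complementary transversal pairs $(T,\bar T)$, where $\bar T$ denotes the transversal picking the opposite element in each pair. The key point is that $\cl^*$ \emph{is} Kakutani: any two disjoint closed sets $C_1,C_2$ extend to complementary transversals $T \supseteq C_1$, $\bar T \supseteq C_2$, because disjointness makes the per-pair choices conflict-free.

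The family is then obtained by deleting a single closed set. For a transversal $T$, let $\cl_T := \cl^* \setminus \{\bar T\}$. I would verify that this is again a closure system (no intersection of surviving closed sets equals $\bar T$, since the only closed supersets of $\bar T$ were $\bar T$ and $E$) and that its closure operator agrees with $\rho$ of $\cl^*$ everywhere except on the input $\bar T$ itself, where the closure jumps from $\bar T$ to $E$; this uses that every set strictly below $\bar T$ is already a partial transversal and hence closed. As $T$ ranges over all transversals, the deleted sets $\bar T$ are exactly the $2^{|E|/2}$ distinct transversals, giving condition (ii). Finally I would check that each $\cl_T$ is non-Kakutani by producing a witnessing non-separable pair, e.g.\ $C_1 = T \setminus \{t_0\}$ and $C_2 = \{\bar t_0\}$ for a chosen $t_0 \in T$ with pair-partner $\bar t_0$: these are disjoint closed sets whose \emph{only} separating half-space in $\cl^*$ is the pair $(T,\bar T)$, which has been destroyed in $\cl_T$.

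The main obstacle, and the part requiring the most care, is arranging that each modification flips the Kakutani status while touching the oracle at \emph{exactly one} input: changing a closure system typically perturbs the closures of many subsets at once, which would let the algorithm detect the change cheaply and break the counting argument. The transversal construction is tailored precisely to avoid this, since deleting $\bar T$ is ``local'': $\bar T$ is a maximal proper closed set (a coatom) that is the closure of no proper subset of itself, and $C_1,C_2$ pin down $(T,\bar T)$ as their \emph{unique} separator. I would present the verification of these two facts (single-point perturbation, and uniqueness of the separator) as the technical core, with the counting/indistinguishability step and the boundary cases (small $m$, and odd $|E|$ handled by a dummy element) being routine.
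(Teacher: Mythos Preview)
Your proposal is correct and follows exactly the adversary/indistinguishability strategy the paper uses: build one Kakutani closure system whose half-spaces are coatoms, show that deleting any single coatom yields a non-Kakutani system whose closure operator differs from the original on that one input only, and then count the coatoms.

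The only difference is the choice of closure system. The paper's construction is simpler: it takes
\[
\cl_\cc \;=\; \{X \subseteq E : |X| \leq n/2\} \cup \{E\},
\]
so that \emph{every} set of size $n/2$ is a half-space and a coatom, and deleting any one of them breaks the Kakutani property while changing the oracle value only at that set. This yields $\binom{n}{n/2}$ critical queries (which the paper then relaxes to $\Omega(2^{n/2})$). Your transversal construction realises the same scheme with the coatoms being the $2^{n/2}$ full transversals; it meets the stated bound exactly but is more elaborate and does not improve on the paper's count. All the verifications you flag as the ``technical core'' (single-point perturbation, uniqueness of the separator for your witness pair $C_1=T\setminus\{t_0\}$, $C_2=\{\bar t_0\}$) go through for $m\ge 2$ as you outline.
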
	
	\begin{proof}
		We can assume w.l.o.g. that $\emptyset \in \cl_\cc$, as otherwise there are no two separable subsets of $E$.
		For any even number\footnote{A similar proof applies to odd numbers. For simplicity, we omit the discussion of that case.} 
		$n \in \mathbb{N}$, consider a set $E$ with $|E| = n$ and the set system 
		$$			
		\cl_\cc = \left\{X\subseteq E : |X| \leq {n}/{2} \right\}\cup\{E\} \enspace .
		$$
		We claim that $(E, \cl_\cc)$ is a Kakutani closure system. Since  $\emptyset,E \in\cl_\cc$ and $|C_1 \cap C_2| \leq n/2$ for any $C_1,C_2 \in \cl_\cc$, $(E,\cl_\cc)$ is closed under intersection and hence, it is a closure system.
		To see that it is Kakutani, notice that all $X \in \cl_\cc$ with $|X| = n/2$ are half-spaces; all other closed sets $Y \in \cl_\cc$ with $0 < |Y| < n/2$ are not half-spaces.
		Thus, for any non-empty $A,B \subseteq E$ with $\cc(A) \cap \cc(B) = \emptyset$, $\cc(A)$ can be extended to a half-space $H_1 \in \cl_\cc$ such that $H_1 \cap \cc(B) = \emptyset$. By construction, $H_1$ and its complement $H_1^c$ form a half-space separation of $A$ and $B$. Hence, $(E,\cl_\cc)$ is Kakutani.
		Note also that for any $C \in \cl_\cc$ with $|C| = n/2$, $(E,\cl_\cc \setminus \{C\})$ remains a closure system, but becomes non-Kakutani.
		
		We are ready to prove the lower bound claimed. Suppose for contradiction that there exists an %deterministic 
		algorithm $\mathfrak{A}$ that decides the Kakutani problem with strictly less than ${n\choose n/2} = \Omega\left(2^{n/2}\right)$ closure operator calls.  
		Then, for $(E,\cl_\cc)$ above, there exists a half-space $C \in \cl_\cc$ with $|C| = n/2$ such that $\mathfrak{A}$ has not called $\cc$ for $C$. But then $\mathfrak{A}$ returns the same answer for the Kakutani and non-Kakutani closure systems $(E,\cl_\cc)$ and $(E,\cl_\cc \setminus \{C\})$, contradicting its correctness.    
	\end{proof}

	The exponential lower bound in Theorem~\ref{thm:kakutaninegative} above holds for {\em arbitrary} (finite) closure systems.
	Fortunately, there is a broad class of closure systems that are known to be Kakutani.
	In particular, as a generic application field of Kakutani closure systems, in Section~\ref{sec:kakutanigraphs} we focus on Kakutani closure systems over \textit{graphs} and in Section~\ref{sec:LatticesKakutani} on those over finite \textit{lattices}.

\section{Domain Specific Results}
\label{sec:domainspecific}

This section is concerned with the special cases of closure systems over \textit{graphs} and \textit{lattices}.

\subsection{Closure Systems over Graphs}
\label{sec:kakutanigraphs}

As a first application of Theorem~\ref{thm:characterization}, in this section we consider Kakutani closure systems over \textit{graphs}.
The following result provides a characterization of Kakutani $\cg$-closure systems over graphs in terms of the \textit{Pasch axiom}~\cite{Chepoi1994}.
We note that the theorem below holds for closure systems defined by any set of paths in the underlying graph. 
\begin{Theorem}\cite{Chepoi1994}
	\label{thm:paschkakutani}
	For any finite graph $G=(V,E)$, the corresponding $\cg$-closure system $\vclog$ is Kakutani if and only if $\cg$ fulfills the Pasch axiom, i.e., 
	\begin{equation}
	\label{eq:pasch}
	x \in \cg(\{u, v\}) \wedge y \in \cg(\{u, w\})	
	\implies  \cg(\{x, w\})\cap \cg(\{y, v\})\neq \emptyset
	\end{equation}
	for all $u, v, w, x, y\in V$.
\end{Theorem}
Note that the theorem above can be turned into a na\"{\i}ve algorithm that decides the Kakutani problem for $\cg$-closure systems over graphs in $\bigO{n^8}$ time and in $\bigO{n^2}$ space by checking the condition in (\ref{eq:pasch}) for all quintuples of vertices; the complexity of computing $\cg$ for any set of vertices is $\bigO{n^3}$~\cite{Dourado/etal/2009}. 
The following more sophisticated algorithm\footnote{Personal communication with Victor Chepoi.} reduces this time complexity to $\bigO{n^5}$ time, using, however, $\bigO{n^4}$ space.
Compute first $\cg(\{u, v\})$ for all $u, v \in V$ and store them in a matrix $M$.  
The time and space complexity of this step is $\bigO{n^5}$ and $\bigO{n^3}$, respectively. 
Using $M$, for each pair of closed sets we can decide in $\bigO{n}$ time whether or not they are disjoint and store this information in a binary matrix $B$. This can be done in $\bigO{n^5}$ time and $\big(n^4)$ space.
Iterating over all quintuples $Q=(u, v, w, x, y)\in V^5$, we can check in constant time from $M$ and $B$ whether $Q$ fulfills (\ref{eq:pasch}), implying the time and space complexity claimed above.

Using the characterization result above, in the theorem below we give a sufficient condition in terms of \textit{forbidden minors} for the Kakutani property for $\cg$-closure systems.
More precisely, as the main contribution of this section, we show that a closure system of a graph is Kakutani whenever the underlying graph does not contain $K_{2,3}$ as a minor. This result may be of some independent interest as well. 
%Together with the characterization result of \citeauthyear{Chartrand/Harary/1967}, our result immediately implies that $\cg$-closure systems over \textit{outerplanar} graphs and hence, over \textit{trees} are always Kakutani. 
%Though the latter result is well-known, it is typically derived directly from the Pasch axiom. In contrast, we obtain it as an immediate consequence of our result mentioned above.   
%
To state the theorem, we recall that $K_{2, 3}$ denotes the complete bipartite graph $(V_1,V_2,E)$ with $|V_1| = 2$ and $|V_2| = 3$. Furthermore, a graph $H$ is a \textit{minor} of a graph $G$ if $H$ can be obtained from $G$ by a sequence
of vertex and edge deletions and edge contractions (see, e.g., \cite{Diestel/2012}).

\begin{Theorem}
	\label{thm:outerplanar} 
	For any finite graph $G=(V,E)$, the  $\cg$-closure system $\vclog$ is Kakutani if $G$ does not contain $K_{2, 3}$ as a minor.
\end{Theorem}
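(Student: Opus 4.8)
The plan is to invoke the Pasch characterization of Theorem~\ref{thm:paschkakutani}: since $\vclog$ is Kakutani if and only if $\cg$ satisfies the Pasch axiom, it suffices to prove the contrapositive of the present statement in the form ``if $\cg$ violates the Pasch axiom, then $G$ contains $K_{2,3}$ as a minor.'' Throughout I would write $I(p,q) := \cg(\{p,q\})$ for the geodesic interval, i.e.\ the set of vertices lying on some shortest $p$--$q$ path. A Pasch violation supplies vertices $u,v,w,x,y$ with $x \in I(u,v)$, $y \in I(u,w)$, and $I(x,w)\cap I(y,v)=\emptyset$. My first step is a short case analysis showing the configuration is non-degenerate: using the \emph{prefix property} of intervals (if $x\in I(u,v)$ then $I(u,x)\subseteq I(u,v)$, and $x\in I(p,v)$ for every $p\in I(u,x)$, both read off from the additivity $d(u,x)+d(x,v)=d(u,v)$) together with the disjointness $I(x,w)\cap I(y,v)=\emptyset$, one checks that $u,v,w,x,y$ are pairwise distinct.

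Next I fix witnessing geodesics: a shortest $u$--$v$ path through $x$, split into $P_{ux}$ and $P_{xv}$; a shortest $u$--$w$ path through $y$, split into $P_{uy}$ and $P_{yw}$; a shortest $x$--$w$ path $\rho$; and a shortest $y$--$v$ path $\tau$. The one fact the hypothesis hands me for free is that $\rho$ and $\tau$ are \emph{vertex-disjoint}, since $V(\rho)\subseteq I(x,w)$ and $V(\tau)\subseteq I(y,v)$. The target structure is a subdivided $K_{2,3}$ whose two degree-three branch vertices are $x$ and $y$, realised by three $x$--$y$ paths passing respectively through $w$, through $v$, and through $u$:
\[
\sigma_1:\ x \xrightarrow{\rho} w \xrightarrow{P_{yw}^{-1}} y, \qquad
\sigma_2:\ x \xrightarrow{P_{xv}} v \xrightarrow{\tau^{-1}} y, \qquad
\sigma_3:\ x \xrightarrow{P_{ux}^{-1}} u \xrightarrow{P_{uy}} y .
\]
Each $\sigma_i$ has length at least two, as it passes through $w$, $v$, resp.\ $u$, all distinct from $x$ and $y$ by the previous step. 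Hence, if the three walks are pairwise internally vertex-disjoint they form a subdivision of $K_{2,3}$; and since $K_{2,3}$ is subcubic, a subdivision of it immediately exhibits $K_{2,3}$ as a minor, completing the reduction.

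The entire argument therefore reduces to producing three internally vertex-disjoint $x$--$y$ paths of length at least two, and this is exactly where the difficulty lies: the six geodesics may overlap in complicated ways, so $\sigma_1,\sigma_2,\sigma_3$ need not be disjoint as chosen. I would control this through Menger's theorem: three internally disjoint $x$--$y$ paths exist unless some two vertices $\{p,q\}\subseteq V\setminus\{x,y\}$ separate $x$ from $y$, so it suffices to rule out such a separator. Each of $\sigma_1,\sigma_2,\sigma_3$ would then have to meet $\{p,q\}$; exploiting $\rho\cap\tau=\emptyset$ together with the betweenness / ``no-shortcut'' behaviour of geodesics (a vertex shared by two of these shortest paths forces distance equalities which, via the prefix property above, contradict either $I(x,w)\cap I(y,v)=\emptyset$ or the minimality of the chosen paths), one shows that no two vertices can intersect all three connections. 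Routing the three paths through the prescribed $u,v,w$ guarantees length at least two even when $x$ and $y$ happen to be adjacent, so no degenerate length-one path can spoil the subdivision.

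I expect this separator analysis---disentangling the pairwise intersections of the six geodesics while leaning on the single disjointness fact $\rho\cap\tau=\emptyset$ that the Pasch violation provides---to be the technical heart of the proof; the non-degeneracy checks, the construction of $\sigma_1,\sigma_2,\sigma_3$, and the passage from a $K_{2,3}$ subdivision to a $K_{2,3}$ minor are essentially bookkeeping.
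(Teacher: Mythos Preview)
Your overall plan---contraposition via the Pasch characterization, verifying that $u,v,w,x,y$ are pairwise distinct, and assembling three $x$--$y$ routes through $u$, $v$, and $w$---is exactly the paper's skeleton. The divergence is in how the disjointness of these routes is secured. The paper does \emph{not} invoke Menger: it introduces $u'$, the last common vertex of the chosen geodesics $P_{ux}$ and $P_{uy}$, and works directly with the pieces $P_{u'x},P_{u'y},P_{xv},P_{yw}$ together with one shortest $v$--$y$ path and one shortest $x$--$w$ path, arguing from $I(x,w)\cap I(y,v)=\emptyset$ that the resulting subgraph contracts to $K_{2,3}$. In particular the paper aims for a \emph{minor via contraction}, not a subdivision.

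Your Menger shortcut has a genuine gap. Ruling out a two-vertex $x$--$y$ separator yields three internally disjoint $x$--$y$ paths, but nothing forces each of them to have length at least two; when $x$ and $y$ are adjacent---which the Pasch violation does not exclude (add the edge $ab$ to $K_{2,3}$ with parts $\{a,b\}$, $\{c,d,e\}$ and take $u=c$, $v=d$, $w=e$, $x=a$, $y=b$)---one of the Menger paths may be the single edge $xy$, and a theta graph with one length-$1$ arm has no $K_{2,3}$ minor. Your sentence about ``routing the three paths through the prescribed $u,v,w$'' conflates the explicit walks $\sigma_1,\sigma_2,\sigma_3$ (which do pass through $u,v,w$) with the abstract Menger paths (which need not). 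The natural repair---running the separator argument in $G$ minus the edge $xy$---first requires showing that each $\sigma_i$ avoids that edge, which drags you straight back into the geodesic-intersection bookkeeping you hoped Menger would let you skip; and your sketch of the separator analysis itself (``betweenness\,/\,no-shortcut behaviour'') is too vague to carry this. The paper's $u'$ device, together with its target of a minor rather than a subdivision, sidesteps the length issue entirely.
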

\begin{proof}
	We prove the claim by contraposition. 
	Let $G = (V, E)$ be a graph such that $\vclog$ is \textit{not} Kakutani. 
	Then, by Theorem \ref{thm:paschkakutani}, $\cg$ does not fulfill the Pasch axiom, i.e., there are $u, v, w\in V$, $x\in \cg(\{u, v\})$, and $y\in \cg(\{u, w\})$ such that
	\begin{align}
		\label{Line:PaschNegation}
		\cg(\{v, y\})\cap \cg(\{x, w\}) = \emptyset.
	\end{align}
	We claim that $u, v, w, x, y$ are pairwise different. 
	We show this property only for $x$ and $w$; the proofs for the other vertex pairs are similar.	
	Suppose for contradiction that $x=w$. Then, by (\ref{Line:PaschNegation}), $x$ (resp. $y$) lies on a shortest path between $u$ and $v$ (resp. $u$ and $x$). 
	But then there is a shortest path between $v$ and $y$ containing $x$ (i.e., $x \in \cg(\{v,y\})$), contradicting the disjointedness condition in (\ref{Line:PaschNegation}).
	
	We are ready to show that (\ref{Line:PaschNegation}) implies that $G$ contains $K_{2, 3}=(V_1,V_2,E)$ as a minor with $V_1 = \{x,y\}$ and $V_2= \{u, v, w\}$.
	By (\ref{Line:PaschNegation}), there are shortest paths $P_{uv}$ between $u, v$ with $x\in P_{uv}$ and $P_{uw}$ between $u, w$ with $y\in P_{uw}$. 
	Let $u'$ be the common vertex of $P_{ux}$ and $P_{uy}$ that has the maximum distance to $u$.
	It must be the case that $u' \neq x$, $u' \neq y$, and $x$ (resp. $y$) lies on the subpath $P_{u'v}$ of $P_{uv}$ (resp. $P_{u'w}$ of $P_{uw}$), as otherwise the disjointness condition in (\ref{Line:PaschNegation}) does not hold. 
	Moreover, as $\cg(\{v, y\})\cap \cg(\{x, w\}) = \emptyset$ by (\ref{Line:PaschNegation}), the subpaths $P_{xv}$ of $P_{u'v}$ and $P_{yw}$ of $P_{u'w}$ must be vertex disjoint. 		
	Hence, $G$ contains the subgraph depicted in Fig.~\ref{graph:1} (it suffices to consider only one shortest path between $u'$ and $u$). 
	Regarding the shortest paths between $v$ and $y$, note that none of them can contain $u'$, as otherwise $x \in \cg(\{v, y\})$ and thus, $\cg(\{v, y\})\cap \cg(\{x, w\})$ would be  non-empty. Furthermore, by (\ref{Line:PaschNegation}), they cannot contain $x$ and $w$. 
	In a similar way, none of the shortest paths between $x$ and $w$  contains $u', y, v$. Combining these properties with the one implied by (\ref{Line:PaschNegation}) that all shortest paths between $v$ and $y$ are pairwise vertex disjoint with all shortest path between $w$ and $x$, we have that $G$ contains the subgraph given in Fig.~\ref{graph:2}. The minor $K_{2, 3}$ claimed in the theorem is then obtained from this subgraph by edge contraction (cf. Fig. \ref{graph:3}).
	\begin{figure}[t]\centering
		\subfloat[\label{graph:1}]{
			\begin{tikzpicture}[scale = 0.3]
				\centering
				%\draw (0,0) .. controls (1,1) .. (0,2);
				\node[black, draw, circle, fill, inner sep=1pt, label= below:$w$](5) at (2, -5) {};
				\node[black, draw, circle, fill, inner sep=1pt, label= above:$u$](0) at (0, 2) {};
				\node[black, draw, circle, fill, inner sep=1pt, label=below:$v$](4) at (-2, -5) {};
				\node[black, draw, circle, fill, inner sep=1pt, label= left:$x$](2) at (-2, -2) {};
				
				\node[draw, circle, fill, inner sep=1pt, label=right:$u'$](1) at (0, 0) {};
				\node[draw, circle, fill, inner sep=1pt, label=right:$y$](3) at (2, -2) {};
				
				%Graph edges
				\draw[decorate, decoration={snake, segment length=2mm, amplitude=.4mm}]  (0)--(1);
				\draw[decorate, decoration={snake, segment length=2mm, amplitude=.4mm}]  (1)--(3);
				\draw[decorate, decoration={snake, segment length=2mm, amplitude=.4mm}]  (1)--(2);
				\draw[decorate, decoration={snake, segment length=2mm, amplitude=.4mm}]  (2)--(4);
				\draw[decorate, decoration={snake, segment length=2mm, amplitude=.4mm}]  (3)--(5);			
		\end{tikzpicture}}\hfil
		\subfloat[\label{graph:2}]{
			\begin{tikzpicture}[scale = 0.3]
				\centering
				%\draw (0,0) .. controls (1,1) .. (0,2);
				\node[black, draw, circle, fill, inner sep=1pt, label= below:$w$](5) at (2, -5) {};
				\node[black, draw, circle, fill, inner sep=1pt, label=below:$v$](4) at (-2, -5) {};
				\node[black, draw, circle, fill, inner sep=1pt, label= left:$x$](2) at (-2, -2) {};
				
				\node[draw, circle, fill, inner sep=1pt, label=right:$u$](0) at (0, 2) {};
				
				\node[draw, circle, fill, inner sep=1pt, label=right:$u'$](1) at (0, 0) {};
				\node[draw, circle, fill, inner sep=1pt, label=right:$y$](3) at (2, -2) {};
				\node (6) at (-2, -7) {};

				%Graph edges
				\draw[decorate, decoration={snake, segment length=2mm, amplitude=.4mm}]  (0)--(1);
				\draw[decorate, decoration={snake, segment length=2mm, amplitude=.4mm}]  (1)--(3);
				\draw[decorate, decoration={snake, segment length=2mm, amplitude=.4mm}]  (1)--(2);
				\draw[decorate, decoration={snake, segment length=2mm, amplitude=.4mm}]  (2)--(4);
				\draw[decorate, decoration={snake, segment length=2mm, amplitude=.4mm}]  (3)--(5);
				\draw[decorate, decoration={snake, segment length=2mm, amplitude=.4mm}]  (3)--(4) node {};	
				\draw[decorate, decoration={snake, segment length=2mm, amplitude=.4mm}]   (2) to[out=-160,in=-180] (-2, -7);	
				\draw[decorate, decoration={snake, segment length=2mm, amplitude=.4mm}]   (-2, -7) to[out=0,in=-180] (5);
		\end{tikzpicture}}\hfil
		\subfloat[\label{graph:3}]{
			\begin{tikzpicture}[scale = 0.3]
				\centering
				%\draw (0,0) .. controls (1,1) .. (0,2);
				\node[black, draw, circle, fill, inner sep=1pt, label= below:$w$](5) at (3, -5) {};
				\node[black, draw, circle, fill, inner sep=1pt, label= below:$u$](0) at (0, -5) {};
				\node[black, draw, circle, fill, inner sep=1pt, label=below:$v$](4) at (-3, -5) {};
				\node[black, draw, circle, fill, inner sep=1pt, label= left:$x$](2) at (-2, 0) {};
				
				\node[draw, circle, fill, inner sep=1pt, label=right:$y$](3) at (2, 0) {};

				%Graph edges
				\draw  (0)--(2);
				\draw  (2)--(4);
				\draw  (2)--(5);
				\draw  (3)--(4);
				\draw  (3)--(5);			
				\draw  (3)--(0);			
				
		\end{tikzpicture}}
		\caption{Graph minor of Non-Kakutani graphs}
	\end{figure}
\end{proof}

\begin{remark}
	We note that the converse of Theorem~\ref{thm:outerplanar} does \textit{not} hold, implying that $K_{2,3}$ as a forbidden minor does not characterize the Kakutani property for closure systems over graphs. Indeed, for all complete graphs $K_n = (V,E)$, the corresponding closure system $(V, 2^V)$ is Kakutani.
	The claim then follows by noting that $K_{2,3}$ is a minor of $K_n$ for all $n \geq 5$.
\end{remark}	

In Corollary~\ref{cor:outerplanar} below we formulate an immediate implication of Theorem~\ref{thm:outerplanar}. We recall that a graph is \textit{outerplanar} if it can be embedded in the plane such that there are no two edges crossing in an interior point and all vertices lie on the outer face. 
\begin{Corollary} 
	\label{cor:outerplanar}
	For any outerplanar graph $G=(V,E)$, the corresponding $\cg$-closure system $\vclog$ is Kakutani.
\end{Corollary}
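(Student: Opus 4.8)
The plan is to obtain this corollary as an immediate consequence of Theorem~\ref{thm:outerplanar}, which already guarantees the Kakutani property for any finite graph that excludes $K_{2,3}$ as a minor. Thus the entire task reduces to a single structural observation: I would verify that every outerplanar graph is $K_{2,3}$-minor-free, and then simply feed this into Theorem~\ref{thm:outerplanar}.

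For the minor-freeness, I would invoke the classical forbidden-minor characterization of \citeauthyear{Chartrand/Harary/1967}, which states that a finite graph is outerplanar if and only if it contains neither $K_4$ nor $K_{2,3}$ as a minor. In particular, one direction of that equivalence gives exactly what I need: an outerplanar graph can never have $K_{2,3}$ as a minor. I would cite this result rather than reprove it, since reproving the Chartrand--Harary theorem would be a substantial detour unrelated to the closure-system machinery developed in the paper.

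Putting the two pieces together, given an arbitrary outerplanar graph $G=(V,E)$, the Chartrand--Harary characterization ensures $G$ has no $K_{2,3}$ minor, and hence Theorem~\ref{thm:outerplanar} applies directly to conclude that the corresponding $\cg$-closure system $\vclog$ is Kakutani. I would also remark that trees fall out as a further special case, since every tree is outerplanar (indeed $K_{2,3}$-minor-free), recovering the well-known positive result for trees that is usually derived straight from the Pasch axiom.

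The only point requiring care — and the closest thing to an obstacle — is ensuring the correct forbidden-minor list is quoted: the outerplanar characterization forbids \emph{both} $K_4$ and $K_{2,3}$, and it is precisely the $K_{2,3}$ exclusion that matches the hypothesis of Theorem~\ref{thm:outerplanar}. Beyond this bookkeeping there is no genuine combinatorial or algebraic content, so the proof is essentially a one-line composition of an external characterization with the theorem just established.
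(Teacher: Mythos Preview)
Your proposal is correct and matches the paper's own proof essentially line for line: the paper also derives the corollary directly from Theorem~\ref{thm:outerplanar} together with the Chartrand--Harary forbidden-minor characterization of outerplanar graphs. Your added remarks about trees and about isolating the $K_{2,3}$ exclusion are accurate elaborations but do not depart from the paper's route.
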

\begin{proof} 
	It follows directly from Theorem~\ref{thm:outerplanar} together with the characterization result of outerplanar graphs by \cite{Chartrand/Harary/1967}.
\end{proof}
Note that the corollary above applies also to trees, as they are (special) outerplanar graphs. Though the result for trees is well-known, it is typically derived directly from the Pasch axiom. In contrast, we obtain it as an immediate consequence of Theorem~\ref{thm:outerplanar}.

\subsection{Closure Systems over Lattices}
	\label{sec:Lattices}
	Our second application field is concerned with closure systems over {\em lattices}.
	The focus lies, as before, on the HSS and MCSS problems for the special case that the underlying ground set is some finite lattice and the closure operator for a subset $S$ of the ground set is defined by the set of all elements lying between the infimum and supremum of $S$.
	For this kind of closure systems we give Alg.~\ref{AlgoLatticeSep} which improves Alg.~\ref{AlgoGreedyHalfSep} for lattice structures.
	Assuming that the closures of the input sets $A$ and $B$ to be separated are disjoint, Alg.~\ref{AlgoLatticeSep} extends them into a disjoint \textit{maximal ideal} $I$ and a \textit{maximal filter} $F$ such that $A \subseteq I$ and $B \subseteq F$ or vice versa  (see Fig.~\ref{fig:lattice_example} for an example of different separations of a finite lattice).
	This specialized version has some important advantages over Alg.~\ref{AlgoGreedyHalfSep}.
	In particular, for certain problem classes it reduces the number of closure operator calls \textit{logarithmically}.
	%; this is for example the case when the underlying lattice is formed by a subset of the power set of some finite ground set $E$ and its cardinality is exponential in that of $|E|$.
	This is the situation e.g.~in frequent closed itemset mining~\cite{Pasquier_etal/99} or formal concept analysis~\cite{Gan05}.
	Furthermore, the disjointness of the closures of any two sets can be decided by comparing their suprema and infima.
	A further important property of the greedy algorithm specialized to lattices is that it regards the input sets $A$ and $B$ above \textit{symmetrically}. 
	This is a crucial difference e.g.~to \textit{inductive logic programming}~\cite{Mugg91,NieWol97,plotkin1970inductive}, where one is typically interested in finding the smallest ideal of the {\em subsumption lattices} that contains the set of positive examples. If this smallest ideal, with supremum defined by the least general generalization of the set of positive examples, is \textit{not} disjoint with the set of negative examples, then the separation problem has no solution. This case, however, does not exclude the situation that there is a filter containing the set of positive examples that is disjoint with a ideal containing the set of negative examples.
	In addition to these properties, we also show that our modified greedy algorithm comprises an \textit{algorithmic} characterization of Kakutani closure systems over lattices. 
	This characterization provides an alternative to the algebraic one  formulated in terms of \textit{distributivity} (see, e.g., \cite{Kubis2002}). 
	
	\begin{figure}
		\label{fig:lattice_example}
		%!TEX root = TCS_Paper.tex
\begin{tikzpicture}
	\begin{scope}[shift={(0, 0)}]
		
		\node[circle, draw=black, fill=black, inner sep=0pt, minimum size=5pt, label=right:{$\topL$}] (A) at (0, 0){};
		\node[circle, draw=black, fill=white, inner sep=0pt, minimum size=5pt] (B) at (0, -0.5){};
		\node[circle, draw=black, fill=white, inner sep=0pt, minimum size=5pt] (C) at (0, -1){};
		\node[circle, draw=black, fill=white, inner sep=0pt, minimum size=5pt] (D) at (0, -1.5){};
		\node[circle, draw=black, fill=white, inner sep=0pt, minimum size=5pt] (E) at (-1, -2){};
		\node[circle, draw=black, fill=white, inner sep=0pt, minimum size=5pt] (F) at (1, -2){};
		\node[circle, draw=black, fill=white, inner sep=0pt, minimum size=5pt] (G) at (-1, -2.5){};
		\node[circle, draw=black, fill=white, inner sep=0pt, minimum size=5pt] (H) at (0, -2.5){};
		\node[circle, draw=black, fill=white, inner sep=0pt, minimum size=5pt] (I) at (1, -2.5){};
		\node[circle, draw=black, fill=white, inner sep=0pt, minimum size=5pt] (J) at (-0.5, -3){};
		\node[circle, draw=black, fill=white, inner sep=0pt, minimum size=5pt] (K) at (0.5, -3){};
		\node[circle, draw=black, fill=black, inner sep=0pt, minimum size=5pt, label=right:{$\botL$}] (L) at (0, -3.5){};
		\draw (A)--(B)--(C)--(D)--(F)--(I)--(K)--(L)--(J)--(G)--(E)--(D);
		\draw (K)--(H)--(E);
		
		\draw[blue, fill=blue, opacity=0.2] (0,0.25) to [closed, curve through = {(-0.25,-0.5) (0,-1.25) (0.25,-0.5)}] (0,0.25);
		
		\draw[red, fill=red, opacity=0.2] (0,-1.25) to [closed, curve through = {(-1.5,-2.75) (0,-3.75) (1.5,-2.75)}] (0,-1.25);
	\end{scope}
	
	\begin{scope}[shift={(8, 0)}]
		
		\node[circle, draw=black, fill=black, inner sep=0pt, minimum size=5pt, label=right:{$\topL$}] (A) at (0, 0){};
		\node[circle, draw=black, fill=white, inner sep=0pt, minimum size=5pt] (B) at (0, -0.5){};
		\node[circle, draw=black, fill=white, inner sep=0pt, minimum size=5pt] (C) at (0, -1){};
		\node[circle, draw=black, fill=white, inner sep=0pt, minimum size=5pt] (D) at (0, -1.5){};
		\node[circle, draw=black, fill=white, inner sep=0pt, minimum size=5pt] (E) at (-1, -2){};
		\node[circle, draw=black, fill=white, inner sep=0pt, minimum size=5pt] (F) at (1, -2){};
		\node[circle, draw=black, fill=white, inner sep=0pt, minimum size=5pt] (G) at (-1, -2.5){};
		\node[circle, draw=black, fill=white, inner sep=0pt, minimum size=5pt] (H) at (0, -2.5){};
		\node[circle, draw=black, fill=white, inner sep=0pt, minimum size=5pt] (I) at (1, -2.5){};
		\node[circle, draw=black, fill=white, inner sep=0pt, minimum size=5pt] (J) at (-0.5, -3){};
		\node[circle, draw=black, fill=white, inner sep=0pt, minimum size=5pt] (K) at (0.5, -3){};
		\node[circle, draw=black, fill=black, inner sep=0pt, minimum size=5pt, label=right:{$\botL$}] (L) at (0, -3.5){};
		\draw (A)--(B)--(C)--(D)--(F)--(I)--(K)--(L)--(J)--(G)--(E)--(D);
		\draw (K)--(H)--(E);
		
		\draw[blue, fill=blue, opacity=0.2] (-0.25,0.25) to [closed, curve through = {(-0.25,-0.5) (-0.75,-1) (-0.5,-3.25) (-0.2,-2.25) (0,-1.75)}] (-0.25,0.25);
		
		\draw[red, fill=red, opacity=0.2] (1.5,-1.75) to [closed, curve through = {(0,-3) (-0.5,-3.75) (1,-3)}] (1.5,-1.75);
	\end{scope}

	\begin{scope}[shift={(4, 0)}]
		
		\node[circle, draw=black, fill=black, inner sep=0pt, minimum size=5pt, label=right:{$\topL$}] (A) at (0, 0){};
		\node[circle, draw=black, fill=white, inner sep=0pt, minimum size=5pt] (B) at (0, -0.5){};
		\node[circle, draw=black, fill=white, inner sep=0pt, minimum size=5pt] (C) at (0, -1){};
		\node[circle, draw=black, fill=white, inner sep=0pt, minimum size=5pt] (D) at (0, -1.5){};
		\node[circle, draw=black, fill=white, inner sep=0pt, minimum size=5pt] (E) at (-1, -2){};
		\node[circle, draw=black, fill=white, inner sep=0pt, minimum size=5pt] (F) at (1, -2){};
		\node[circle, draw=black, fill=white, inner sep=0pt, minimum size=5pt] (G) at (-1, -2.5){};
		\node[circle, draw=black, fill=white, inner sep=0pt, minimum size=5pt] (H) at (0, -2.5){};
		\node[circle, draw=black, fill=white, inner sep=0pt, minimum size=5pt] (I) at (1, -2.5){};
		\node[circle, draw=black, fill=white, inner sep=0pt, minimum size=5pt] (J) at (-0.5, -3){};
		\node[circle, draw=black, fill=white, inner sep=0pt, minimum size=5pt] (K) at (0.5, -3){};
		\node[circle, draw=black, fill=black, inner sep=0pt, minimum size=5pt, label=right:{$\botL$}] (L) at (0, -3.5){};
		\draw (A)--(B)--(C)--(D)--(F)--(I)--(K)--(L)--(J)--(G)--(E)--(D);
		\draw (K)--(H)--(E);
		
		\draw[blue, fill=blue, opacity=0.2] (0,0.25) to [closed, curve through = {(-0.2, -1.5) (0.95,-2.75) (1, -2.75) (1.4, -2.25) (0.2, -1)}] (0,0.25);
		
		\draw[red, fill=red, opacity=0.2] (-1.2,-1.75) to [closed, curve through = {(-1.4,-2) (0,-3.75) (0.8,-3.5) (0,-2.2)}] (-1.2,-1.75);
		
	\end{scope}
\end{tikzpicture}
		\caption{The example shows three different separations of the lattice elements $\topL$ and $\botL$ the first two ones are half-space separations while the third one is a maximal disjoint closed set separation. Note that the lattice is not distributive, i.e., our greedy algorithm does not guarantee to find a half-space separation.}
	\end{figure}
	
	%	Distributive lattices are of importancy in computer science in many contexts, for example in {\em distributed computing}~\cite{Babaoglu93}, formal concept analysis (FCA)~\cite{Gan05}, or phylogenetics~\cite{GelyCN18}. 
	%	
	%	We recall from Section~\ref{sec:preliminaries} that, unless otherwise stated, by lattice we always mean {\em finite} lattices. 
	
	\subsubsection{Notions and Notation}
	%Section~\ref{sec:KakutaniLattices} is concerned with maximal closed set and half-space separations in finite \textit{lattices}. 
	For basic notions from \textit{lattice theory}, the reader is referred to
	\cite{Davey/Priestley/2002,Grae11}. 
	Let  $(S; \leq)$ be a partially ordered set (or poset) and $X \subseteq S$.
	The \textit{supremum} (resp. \textit{infimum}) of $X$, if it exists,
	%An element of $S$ is the \textit{supremum} of $X$, if it exists, 
	is denoted by $\Top{X}$ (resp. $\Bot{X}$).
	For a finite lattice $(L;\leq)$, $\botL$ (resp. $\topL$) denotes $\Bot{L}$ (resp. $\Top{L}$).
	%, if $\Top{X} \geq x$ for all $x \in X$ and $y \geq \Top{X}$ for all $y \in S$ with $y \geq x$ for all $x \in X$. 
	%Similarly, an element of $S$ is the \textit{infimum} of $X$, denoted $\Bot{X}$, whenever $\Bot{X} \leq x$ for all $x \in X$ and $y \leq \Bot{X}$ for all $y \in S$ with $y \leq x$ for all $x \in X$.	
	%A poset $(L; \leq)$ is a \textit{lattice} if $\joinp{a,b}$ and $\meetp{a,b}$ exist for all $a,b\in L$. 
	%It follows from the definitions that if $(L; \leq)$ is a lattice then $\Top{L'}$ and $\Bot{L'}$ exist for all finite subsets $L' \subseteq L$. 
	%Thus, if $L$ is finite then it is \textit{bounded}, i.e., has a bottom and top element $\botL=\Bot{L}$ and $\topL=\Top{L}$, respectively.
	Unless otherwise stated, throughout this section by lattices we always mean \textit{finite} lattices. 	
	
	Let $(L; \leq)$ be a lattice. An element $x \in L$ is an \textit{upper} (resp. \textit{lower}) \textit{cover} of $a\in L$ if $a \leq x$ (resp. $x \leq a$) and for all $y\in L$ with $a\leq y$ (resp. $a \geq y$) it holds that $a\leq x \leq y$ (resp. $y\leq x \leq a$). 
	The set of upper (resp. lower) covers of $a$ is denoted by $\covu(a)$ (resp. $\covl(a)$).
	A lattice $L$ is called \textit{distributive} if $\meetp{a,\joinp{b,c}}=\joinp{\meetp{a,b},\meetp{a,c}}$  holds for all $a, b, c\in L$. 
	A \textit{sublattice} of $L$ is a non-empty subset of $L$ which is a lattice. 
	An {\em ideal} $I$ of $L$ is a non-empty subset of $L$ satisfying (i) $\joinp{a,b} \in I$ for all $a,b \in I$ and (ii) $a \in I$ whenever $a \in L$, $b \in I$, and $a \leq b$.
	An ideal $I \subsetneq L$ is \textit{prime} if $a\in I$ or $b\in I$ whenever $\meetp{a,b}\in I$. 
	The dual notions of ideals and prime ideals are called \textit{filters} and \textit{prime filters}, respectively. 
	%Dually, a \textit{filter} $F$ of $L$ is a non-empty subset of $L$ satisfying:
	%\begin{itemize}
	%	\item[(i)]  $a \wedge b \in F$ for all $a,b \in F$ and
	%	\item[(ii)] if $a \in L$, $b \in F$, and $a \geq b$, then $a \in F$
	%\end{itemize}
	%A filter $F$ is called \textit{prime filter} if additionally holds that for $a \vee b\in F$ it follows $a\in F$ or $b\in F$.
	One can easily check that all ideals and filters of $L$ are sublattices of $L$. Furthermore, as $|L| < \infty$ by assumption, 
	an ideal $I$ (resp. filter $F$) can be represented by $\TopI$ (resp. $\BotF$). The ideal (resp. filter) of $L$ with top (resp. bottom) element $a$ is denoted by $(a]$ (resp. $[a)$).  It follows from the definitions that the complement of a prime ideal of $L$ is a prime filter of $L$ and vice versa. 
	
	\paragraph{Closure Systems over Lattices}
	
	For finite lattices $(L; \leq)$, we will consider the usual \textit{closure} operator (see, e.g.,~\cite{vandeVel1984}), i.e., the function $\cla: 2^L \to 2^L$ defined by 
	\begin{equation}
	\label{eq:LatticeOperator}
	\cla:L'\mapsto\{x\in L~|~ \Bot{L'} \leq x\leq \Top{L'} \}
	\end{equation}
	for all $L' \subseteq L$. 
	%Since $L$ is finite, $\bot_{L'}$ and $\top_{L'}$ both exist.
	The set $\cla(L')$ forms a \textit{closed} sublattice of $L$, where a sublattice $S$ of $L$ is {\em closed} if for all $a,b \in S$ and for all $c \in L$, $a \leq c \leq b$ implies $c \in S$. 
	%One can easily check that $\cla$ defined in (\ref{eq:LatticeOperator}) is a closure operator on $L$.
	Thus, $(L, \cl_\cla)$ is a \textit{closure system} formed by the family of closed sublattices of $L$ together with the empty set. This type of closure systems will be referred to as \textit{$\lambda$-closure systems}.
	In Lemmas~\ref{lemma:1} and \ref{lemma:2} below we formulate some basic properties of finite lattices and $\lambda$-closure systems.
	Though most of the claims follow from basic properties of lattices, we provide all proofs in Appendix~\ref{app:A} for the reader's convenience.

	\begin{lemma}\label{lemma:1}
		Let $(L;\leq)$ be a finite lattice and  $A, B\subseteq L$. Then the following statements are equivalent:
		\begin{enumerate}[label = (\roman*)]
			\item \label{item:2bot_top} $\NegBot\nleq \PosTop$,
			\item \label{item:2disjointness}  $[\inf B) \cap (\sup A] =\emptyset$,
			\item \label{item:2ideal_filter}    there exist an ideal $I \subseteq L$ and a filter $F \subseteq L$ with $I \cap F = \emptyset$ such that $A\subseteq I \wedge B\subseteq F$.
		\end{enumerate}
	\end{lemma}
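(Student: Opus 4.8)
The plan is to prove the chain of implications (i) $\Rightarrow$ (ii) $\Rightarrow$ (iii) $\Rightarrow$ (i), noting that the first step is in fact an equivalence, so that closing the cycle yields the mutual equivalence of all three. The single observation driving everything is that, by the definitions of the principal filter and principal ideal, $[\inf B) \cap (\sup A] = \{x \in L : \inf B \leq x \leq \sup A\}$.

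For (i) $\Leftrightarrow$ (ii) I would argue directly from transitivity: if $\inf B \leq \sup A$, then $\inf B$ itself lies in $[\inf B) \cap (\sup A]$, so the intersection is non-empty; conversely, any $x$ in that intersection gives $\inf B \leq x \leq \sup A$ and hence $\inf B \leq \sup A$. Contraposing yields exactly $\inf B \nleq \sup A \iff [\inf B) \cap (\sup A] = \emptyset$. For (ii) $\Rightarrow$ (iii) the witnesses are simply the principal ideal $I = (\sup A]$ and the principal filter $F = [\inf B)$: each $a \in A$ satisfies $a \leq \sup A$, so $A \subseteq I$, dually $B \subseteq F$, and $I \cap F = \emptyset$ is verbatim the hypothesis (ii).

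The remaining implication (iii) $\Rightarrow$ (i) is where the only real work lies. Given an ideal $I \supseteq A$ and a filter $F \supseteq B$ with $I \cap F = \emptyset$, the key step is to promote the set containments to element containments: because $L$ is finite and ideals are closed under binary (hence finite) joins, $\sup A \in I$, and dually $\inf B \in F$ since filters are closed under finite meets. Assuming for contradiction that $\inf B \leq \sup A$, downward-closedness of the ideal $I$ then forces $\inf B \in I$ as well, so $\inf B \in I \cap F$, contradicting disjointness. Hence $\inf B \nleq \sup A$, which is (i), closing the cycle.

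I expect the main, though still modest, obstacle to be precisely this last promotion step: verifying that an ideal (resp.\ filter) of a finite lattice contains the supremum (resp.\ infimum) of every subset it contains. This is a routine induction on the two defining axioms of an ideal recalled in the preliminaries (closure under binary joins and downward closure), together with finiteness of $L$ to guarantee that the relevant joins and meets exist and are reached in finitely many steps.
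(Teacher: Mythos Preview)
Your proposal is correct and follows essentially the same cycle (i)$\Rightarrow$(ii)$\Rightarrow$(iii)$\Rightarrow$(i) as the paper, with the same witnesses $I=(\sup A]$, $F=[\inf B)$ for (ii)$\Rightarrow$(iii). The only cosmetic difference is in (iii)$\Rightarrow$(i): the paper argues via $\inf F\leq\inf B\leq\sup A\leq\sup I$ (using that a finite ideal/filter has a top/bottom element), whereas you show directly that $\sup A\in I$ and $\inf B\in F$ and then use downward closure of $I$; both reach the same contradiction.
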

%	\begin{proof}
%		For \ref{item:2bot_top}$\implies$\ref{item:2disjointness}, suppose for contradiction that $[\inf B) \cap (\sup A]\neq\emptyset$. Then there is an $x \in L$ with $\NegBot\leq x$ and $x \leq \PosTop$, contradicting (i). The proof of \ref{item:2disjointness} $\implies$ \ref{item:2ideal_filter} follows directly from the fact that $[\inf B)$ is a filter and $(\sup A]$ an ideal. Regarding \ref{item:2ideal_filter}$\implies$\ref{item:2bot_top}, it must be the case that $\BotB \nleq \TopA$, as otherwise $\BotF \leq \BotB \leq \TopA \leq \TopI$, contradicting the disjointness of $I$ and $F$.
%	\end{proof}  
	\begin{lemma}\label{lemma:2}
		Let $(L,\cC_\cla)$ be the $\lambda$-closure system over a finite lattice $(L;\leq)$ and $A, B\subseteq L$. Then $\cla(A)\cap\cla(B)=\emptyset$ if and only if there exist an ideal $I \subseteq L$ and a filter $F \subseteq L$ with $I \cap F = \emptyset$ such that $(A\subseteq I \wedge B\subseteq F)$ or $(B\subseteq I \wedge A\subseteq F)$.
	\end{lemma}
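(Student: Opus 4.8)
The plan is to prove the two directions separately, using the explicit form of $\cla$ together with Lemma~\ref{lemma:1} for the harder direction.

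For the ``if'' direction I would first observe that every ideal and every filter of $L$ is not merely a sublattice but a \emph{closed} one, hence a member of $\cl_\cla$: if $a \leq c \leq b$ with $a,b$ in an ideal $I$ and $c \in L$, then $c \leq b \in I$ forces $c \in I$ by downward closure, and the argument for filters is dual. Consequently, whenever $A \subseteq I$ with $I \in \cl_\cla$, the closure operator property (that $\cla(A)$ is the smallest closed set containing $A$) gives $\cla(A) \subseteq I$, and likewise $\cla(B) \subseteq F$. Since $I \cap F = \emptyset$ by hypothesis, the inclusions immediately yield $\cla(A) \cap \cla(B) = \emptyset$. The symmetric case $B \subseteq I$, $A \subseteq F$ is handled identically.

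For the ``only-if'' direction I would argue contrapositively against the order inequalities underlying Lemma~\ref{lemma:1}. The key claim is that $\cla(A) \cap \cla(B) = \emptyset$ forces $\NegBot \nleq \PosTop$ \emph{or} $\PosBot \nleq \NegTop$. To prove this, suppose both $\NegBot \leq \PosTop$ and $\PosBot \leq \NegTop$ hold and consider the element $m = \PosBot \vee \NegBot$, which exists because $L$ is finite. From the trivial $\PosBot \leq \PosTop$ together with $\NegBot \leq \PosTop$ we get $m \leq \PosTop$, and since also $\PosBot \leq m$, we conclude $\PosBot \leq m \leq \PosTop$, i.e.\ $m \in \cla(A)$; the symmetric computation using $\PosBot \leq \NegTop$ and $\NegBot \leq \NegTop$ gives $m \in \cla(B)$. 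Hence the two closures share $m$, contradicting disjointness.

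Having established that $\cla(A) \cap \cla(B) = \emptyset$ entails $\NegBot \nleq \PosTop$ or $\PosBot \nleq \NegTop$, I would finish by invoking Lemma~\ref{lemma:1} in its two orientations. In the first case the equivalence \ref{item:2bot_top}$\iff$\ref{item:2ideal_filter} supplies an ideal $I$ and a filter $F$ with $I \cap F = \emptyset$, $A \subseteq I$, and $B \subseteq F$; in the second case $\PosBot \nleq \NegTop$, applying the same lemma with the roles of $A$ and $B$ interchanged produces such an ideal and filter with $B \subseteq I$ and $A \subseteq F$. Either outcome is precisely the disjunction in the statement. I expect the only delicate points to be keeping track of the orientation of $A$ and $B$ when applying Lemma~\ref{lemma:1} in its swapped form, and verifying cleanly that the witness $m = \PosBot \vee \NegBot$ really lands inside both intervals $\cla(A)$ and $\cla(B)$.
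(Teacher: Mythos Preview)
Your proposal is correct and follows essentially the same route as the paper: the ``if'' direction uses that ideals and filters lie in $\cl_\cla$, and the ``only-if'' direction produces the witness $m=\PosBot\vee\NegBot$ in both closures under the assumption that both order inequalities fail, then appeals to Lemma~\ref{lemma:1} in each orientation. The only superfluous remark is that $m$ ``exists because $L$ is finite''---binary joins exist in any lattice by definition---but this does not affect the argument.
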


	\subsubsection{Maximal Closed Set Separation in Lattices}\label{sec:LatticeAlgorithm}
	\begin{comment}
	\textcolor{red}{Turning towards the problem of finding maximal closed separating sets in lattices we will see that for given closed disjoint sets $X, Y$ there is a disjoint maximal ideal and a maximal filter, one containing $X$ and the other $Y$. 
	By Theorem~\ref{thm:correctness}, Alg.~\ref{AlgoGreedyHalfSep} correctly solves the \MCSep~problem for $(L, \cl_\cc)$. 
	Furthermore, by Theorem~\ref{thm:kakutanilattice}, it solves the HSS~problem as well whenever $(L;\leq)$ is a distributive lattice.
	Let $H \in \cl_\cc$ be a non-trivial half-space (i.e., $H \notin \{\emptyset, L\}$ and $H^c \in \cl_\cc$). 
	By definition, $H$ and $H^c$ are closed sublattices of $L$.
	Hence, the infimum and supremum of $L$ cannot belong to the same half-space; we can assume without loss of generality that $\bot_L \in H$ and $\top_L \in H^c$.
	Since $H$ (resp. $H^c$) is a closed sublattice of $L$, it is a {\em down-set} or {\em order ideal} (resp. {\em up-set} or {\em order filter}) satisfying $a \vee b \in H$ (resp. $a \wedge b \in H^c$) for all $a,b \in H$. 
	Thus, $H$ (resp. $H^c$) is an {\em ideal} (resp. {\em filter}).  
	But then $H$ is a {\em prime ideal} and $H^c$ is a {\em prime filter}, as they partition $L$.}

	Though most of the positive results of this section hold for distributive lattices, RESULT OF UTA PRISS ABOUT CONVERTING LATTICES INTO DISTRIBUTIVE LATTICES. PLEASE CHECK!!!
	we will first define some lattice concerning first order logic terms and then consider some distributive sublattice of this type of lattice.
	\end{comment}

	Applying Thm.~\ref{thm:optimal} to $\lambda$-closure systems over a lattice $(L; \leq)$, % with $\lambda$ defined in (\ref{eq:LatticeOperator}), 
	we have that Alg.~\ref{AlgoGreedyHalfSep} requires $\bigO{|L|}$ closure operator calls. 
	If the cardinality of $L$ is exponential in some parameter $n$, then the bound above becomes exponential in $n$. 
	As an example, in case of \textit{formal concept analysis}~\cite{Gan05}, the cardinality of the concept lattice can be exponential in that of the underlying sets of objects and attributes. As another example, the lattice formed by the family of \textit{closed (item)sets} of a transaction database over $n$ items can also be exponential in $n$~\cite{Boros03}.
	These and other examples motivate us to adapt Alg.~\ref{AlgoGreedyHalfSep} to lattices in a natural way, allowing for an upper bound on the number of closure operator calls in terms of the cardinalities of the \textit{upper} and \textit{lower covers} of a lattice and the maximum chain length in $L$.  
	As we show below, in case of concept lattices or (frequent) closed itemset lattices, the exponential bound above reduces to $\bigO{n^2}$. 
%	This shows that using domain specific knowledge our basic greedy Alg.~\ref{AlgoGreedyHalfSep} is greatly improvable.
	
	\begin{algorithm}[t]
		\SetAlgoLined
		\caption{\sc Maximal Closed Set Separation in Lattices}
		\DontPrintSemicolon
		\LinesNumbered
		\label{AlgoLatticeSep}
		\KwIn{lattice $(L;\leq)$ with $|L| < \infty$ given by an upward and a downward refinement operator returning $\covu(a)$ and $\covl(a)$ for any $a \in L$, and $A,B \subseteq L$}
		\KwOut{supremum of a maximal ideal $I \in \cC_\cla$ and infimum of a maximal filter $F \in \cC_\cla$ separating $A$ and $B$ in $(L,\cC_\cla)$ with $\cla$ defined in (\ref{eq:LatticeOperator}) if $\cla(A)\cap \cla(B) = \emptyset$; ``{\sc No}'' o/w}
		\BlankLine
		%			\Function{Greedy}{E, X, Y}
		%	$\PosTop\leftarrow\bigvee P, \PosBot\leftarrow\bigwedge P, \NegTop\leftarrow\bigvee N, \NegBot\leftarrow\bigwedge N$\;
		
		\lIf{\ $\left(\TopA \ngeq \BotB\right)$}
		{$\top_I\leftarrow\TopA, \ \bot_F \leftarrow \BotB$}
		\lElseIf{$\left(\TopB \ngeq \BotA\right)$}
		{$\top_I\leftarrow\TopB, \ \bot_F \leftarrow \BotA$}
		\lElse{\Return{``{\sc No}''}}
		
		\lWhile{\label{line:upward}$\exists u \in \covu(\top_I)$ with $u\ngeq \bot_F$}
		{$\top_I \leftarrow u$}
		
		\lWhile{\label{line:downward}$\exists l \in \covl(\bot_F)$ with $l\not\leq \top_I$}
		{$\bot_F\leftarrow l$}
		
		\Return{$\top_I, \bot_F$}\;
	\end{algorithm}
	
	The algorithm solving the \MCSep-problem for finite lattices is given in Alg.~\ref{AlgoLatticeSep}. 
	In case of lattices we can assume 
	%for further simplifications obtained by a compact representation of maximal disjoint closed sets. It assumes 
	that the input lattice $(L;\leq)$ is given by an upward  $\covu$ and a downward $\covl$ refinement operator returning the sets of upper resp. lower covers for the elements of $L$.
	For any $A,B \subseteq L$, the algorithm first checks whether their closures are disjoint or not; this is decided by comparing the suprema and infima of $A$ and $B$ (cf. Lines~1--3).
	%; the correctness of this step follows from Lemma~\ref{lemma:LatticeSep}.
	%The suprema and infima of $A$ and $B$ can be determined by iteratively applying the refinement operator.
	If the two closed sets are disjoint then, by Lemma~\ref{lemma:2}, $L$ has a smallest ideal $I$ and a smallest filter $F$ such that $I$ and $F$ are disjoint and either $\cla(A) \subseteq I$ and $\cla(B) \subseteq F$ or vice versa.
	The algorithm then iteratively tries to extend either $I$ into a larger ideal or $F$ into a larger filter in  a way that the extension does not violate the disjointness condition.
	In the first case, the supremum of $I$ is replaced by one of its upper covers; in the second one the infimum of $F$ by one of its lower covers. 
	Finally, the algorithm stops when the current ideal and hence, the current filter becomes prime or when any further extension makes them non-disjoint.   
	
	Alg.~\ref{AlgoLatticeSep} has some important advantageous properties over Alg.~\ref{AlgoGreedyHalfSep}. 
	In particular, while Alg.~\ref{AlgoGreedyHalfSep} considers \textit{all} uncovered elements for the extension of the current closed sets, Alg.~\ref{AlgoLatticeSep} restricts the choice of the next generator element to $\covu(\TopI) \cup \covl(\BotF)$, i.e., to a \textit{subset} of the set of elements uncovered so far. 	
	Although in the worst case this change does not improve the number of closure operator calls stated in Thm.~\ref{thm:optimal} for Alg.~\ref{AlgoGreedyHalfSep}, below we show that a logarithmic bound holds for certain closure systems over lattices. 
	%Examples of such closure systems include concept lattices~\cite{Gan05} and lattices formed by the family of frequent closed  itemsets of a transaction database~\cite{Boros03}.	
	%	Below we show that the number of elements considered for extension (i.e., which correspond to the closure operator calls in Alg.~\ref{AlgoGreedyHalfSep}) is bounded by $M_\uparrow + M_\downarrow$, where $M_\uparrow$ (resp. $M_\downarrow$) is the maximum of the total number of upper (resp. lower) covers taken over all chains of $L$. 
	%	This bound is at least as good as that  stated in Thm.~\ref{thm:optimal} for Alg.~\ref{AlgoGreedyHalfSep}, by noting that it does not improve it in the worst-case. The worst-case scenario occurs, however, only for lattices of very special structure.
	Another advantageous property of Alg.~\ref{AlgoLatticeSep} is that it utilizes that the disjointness of two closed sets can be decided by comparing two elements only, i.e., the supremum of the current ideal with the infimum of the current filter. Furthermore, the closure operator can be calculated in an easy way by taking advantage of the fact that any closed sublattice of $L$ can be represented by its top and bottom elements. 
	We have the following result for Alg.~\ref{AlgoLatticeSep}:
	
	\begin{Theorem}
		\label{thm:AlgoLatticeSep}
		For any $\lambda$-closure system over a finite lattice $(L;\leq)$, Alg.~\ref{AlgoLatticeSep} solves the \MCSep~problem correctly. 
		% by considering with at most $M_\uparrow + M_\downarrow+2$ evaluations of the ``$\leq$'' relation.
	\end{Theorem}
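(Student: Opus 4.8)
The plan is to establish correctness of Algorithm~\ref{AlgoLatticeSep} by proving three things: that the initial assignment of $\topL_I$ and $\botL_F$ is justified, that the loops maintain the disjointness invariant, and that the final $I = (\topL_I]$ and $F = [\botL_F)$ are indeed a \emph{maximal} disjoint ideal-filter pair separating $A$ and $B$.

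First I would handle the initialization (Lines~1--3). By Lemma~\ref{lemma:2}, $\cla(A) \cap \cla(B) = \emptyset$ holds if and only if there is a disjoint ideal-filter pair separating $A$ and $B$ in one of the two orientations. Tracing through the proof of Lemma~\ref{lemma:1} (specifically the equivalence of \ref{item:2bot_top} and \ref{item:2disjointness}), the condition $\TopA \ngeq \BotB$ is exactly the condition under which $(\sup A]$ and $[\inf B)$ are disjoint, giving $A \subseteq (\TopA] = I$ and $B \subseteq [\BotB) = F$; symmetrically for the \textbf{ElseIf} branch. If both tests fail, then $\TopA \geq \BotB$ and $\TopB \geq \BotA$, which by Lemma~\ref{lemma:2} forces $\cla(A) \cap \cla(B) \neq \emptyset$, so returning ``{\sc No}'' is correct. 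Thus after initialization $\topL_I$ and $\botL_F$ represent a disjoint ideal $(\topL_I]$ and filter $[\botL_F)$; the key invariant to carry forward is $\topL_I \ngeq \botL_F$, which by Lemma~\ref{lemma:1} is equivalent to $(\topL_I] \cap [\botL_F) = \emptyset$.

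Next I would argue that each loop preserves this invariant and terminates. In the upward loop (Line~\ref{line:upward}), we replace $\topL_I$ by an upper cover $u$ only when $u \ngeq \botL_F$, which is precisely the disjointness-preserving condition by Lemma~\ref{lemma:1}; moreover replacing the supremum by an upper cover genuinely enlarges the ideal, since $(\topL_I] \subsetneq (u]$ whenever $\topL_I < u$. Termination follows because each step strictly increases $\topL_I$ in the finite lattice, so the chain $\topL_I < u < \cdots$ has bounded length (at most the maximum chain length of $L$). The downward loop is dual. Hence on termination we have a disjoint ideal $I = (\topL_I]$ and filter $F = [\botL_F)$ with $A,B$ (or $B,A$) contained appropriately.

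The main obstacle, and the step I would spend the most care on, is \emph{maximality}. I must show that when both loops halt, neither $I$ nor $F$ can be extended to a strictly larger disjoint ideal-filter pair. Here the crucial structural fact is that any ideal is determined by its supremum and any proper extension of $I$ as an ideal must contain some element $x > \topL_I$; by the cover relation in a finite lattice, any such $x$ lies above some upper cover $u \in \covu(\topL_I)$, so $u$ would belong to the extended ideal. The upward loop terminated precisely because every $u \in \covu(\topL_I)$ satisfies $u \geq \botL_F$, meaning $u \in F$ as well, so adding $u$ would destroy disjointness. Thus no disjoint ideal properly containing $(\topL_I]$ exists while keeping $F$ fixed, and dually for $F$. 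I would phrase this by reducing an arbitrary disjoint extension $(I',F')$ with $I \subseteq I'$, $F \subseteq F'$ and at least one containment proper to the existence of a violating cover, contradicting the termination condition. The delicate point is ensuring that ``maximal with respect to the MCSS ordering'' (where either containment may be the proper one) is fully captured, which requires checking that a proper extension on \emph{either} side is blocked by the corresponding loop's exit condition, using disjointness $\topL_I \ngeq \botL_F$ to tie the two sides together.
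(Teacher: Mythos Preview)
Your proposal is correct and follows essentially the same approach as the paper's proof: initialization justified via Lemmas~\ref{lemma:1} and~\ref{lemma:2}, disjointness maintained as the invariant $\top_I \ngeq \bot_F$, and maximality argued by contradiction through an upper (resp.\ lower) cover of $\top_I$ (resp.\ $\bot_F$) that must lie in the other set by the loop's exit condition. One small point worth making explicit (the paper also leaves it implicit): the upward loop's exit condition gives $u \geq \bot_F$ for the \emph{initial} value of $\bot_F$, but since the downward loop only decreases $\bot_F$, the inequality $u \geq \bot_F^{\text{final}}$ follows a fortiori; and any closed $I' \in \cl_\cla$ containing the ideal $(\top_I]$ already contains $\bot_L$, hence is itself an ideal, so restricting attention to ideal--filter extensions loses no generality.
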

	\begin{proof}
		Let $(L, \cl_\cla)$ be the $\lambda$-closure system over a lattice $(L;\leq)$ and $A,B \subseteq L$.
		The correctness for the case that $\cla(A) \cap \cla(B) \neq \emptyset$ (or equivalently, $A$ and $B$ are not separable in $\cl_\cla$) is immediate from Lemmas~\ref{lemma:1} and \ref{lemma:2}.
		Applying Lemma~\ref{lemma:2} to the case that $\cla(A)\cap\cla(B)=\emptyset$, there exist disjoint ideal $I$ and filter $F$ in $\cC_\cla$ such that $A \subseteq I$ and $B \subseteq F$ or vice versa.
		For symmetry, we can assume without loss of generality that $A \subseteq I$ and $B \subseteq F$.
		Then, by Lemma~\ref{lemma:1}, the condition in Line~1 holds and thus, the algorithm terminates in Line~6 for this case.
		Consider the sequences $u_1,\ldots,u_p \in L$ and $l_1,\ldots,l_q \in L$ selected in this order in Lines~\ref{line:upward} and \ref{line:downward}, respectively.
		By construction, $A \subseteq (u_0] \subsetneq (u_1] \subsetneq \ldots \subsetneq (u_p]$ and $B \subseteq [l_0) \subsetneq [l_1) \subsetneq \ldots \subsetneq [l_q)$, where $u_0 = \TopA$ and $l_0 = \BotB$.
		Furthermore, as $u_p \ngeq l_q$ (cf. Line~\ref{line:downward}), the ideal $(u_p]$ and filter $[l_q)$ corresponding to the output $\top_I=u_p$ and $\bot_F=l_q$ are disjoint by Lemma~\ref{lemma:1}. Thus, they form a closed set separation of $A$ and $B$ in $\cC_\cla$.		
		
		We now show that $(u_p]$ and  $[l_q)$ form a \textit{maximal} closed set separation of $A$ and $B$  in $\cC_\cla$. 
		Suppose for contradiction that there exist $I',F' \in \cC_\cla$ with $I' \cap F' = \emptyset$, $(u_p] \subseteq I'$, and $[l_q) \subseteq F'$ such that at least one of the two containments is proper.
		We present the proof for $(u_p] \subsetneq I'$ only; the case of $[l_q) \subsetneq F'$ is similar. 
		Since $(u_p] \subsetneq I'$, there exists an $u \in \covu(u_p)$ with $u_p \lneq u \leq \Top{I'}$. 
		But then, by Line~\ref{line:upward} we have $u\geq l_q$, which contradicts $I'\cap F'=\emptyset$ by Lemma~\ref{lemma:2}, as $\Top{I'}\geq u \geq l_q \geq \Bot{F'}$.
		%		By Lemma~\ref{lemma:1} we have $\Top{I'} \ngeq \Bot{F'}$, as $I' \cap F' = \emptyset$.
		%		But then $u \ngeq l_q$, as otherwise $\Top{I'} \geq u \geq l_q \geq \Bot{F'}$, contradicting the property of $u_p$ that $u' \geq \BotF$ for all $u' \in \covu(u_p)$ (cf. Line~\ref{line:upward}).
		%		
	\end{proof}
	
	One can easily check that the number of evaluations of the relation ``$\leq$'' in Lines~\ref{line:upward} and \ref{line:downward} is $\bigO{H_L C_L}$, where $H_L$ is the maximum chain length in $L$ and $C_L = \max\limits_{x \in L} \max \{|\covu(x)|,|\covl(x)|\}$.
	Proposition~\ref{thm:setlattices} below utilizes this property for the special case that the underlying lattice is a family of subsets of some finite ground set.
	\begin{Proposition}
		\label{thm:setlattices}
		Let $(L,\cC_\cla)$ be a $\lambda$-closure system over a lattice $(L;\subseteq)$ with $L \subseteq 2^E$ for some ground set $E$ of cardinality $n$.
		Then Alg.~\ref{AlgoLatticeSep} solves the \MCSep~problem for the $\lambda$-closure system over $(L;\subseteq)$ with $\bigO{n^2}$ evaluations of the subset relation.
	\end{Proposition}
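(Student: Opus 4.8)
The plan is to derive the quadratic bound from the general estimate stated immediately before the proposition. Recall that Algorithm~\ref{AlgoLatticeSep} evaluates the relation ``$\subseteq$'' only inside its two \textbf{while}-loops (Lines~\ref{line:upward} and~\ref{line:downward}), and that the number of such evaluations is $\bigO{H_L C_L}$, where $H_L$ is the maximum chain length of $(L;\subseteq)$ and $C_L = \max_{x \in L}\max\{|\covu(x)|,|\covl(x)|\}$. Since correctness is already guaranteed by Theorem~\ref{thm:AlgoLatticeSep}, it therefore suffices to show that both $H_L = O(n)$ and $C_L = O(n)$ whenever $L \subseteq 2^E$ with $|E| = n$; multiplying the two bounds then yields $\bigO{n^2}$.

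First I would bound the height. Because the order on $L$ is set inclusion, every cover step strictly increases cardinality: along any chain $X_0 \subsetneq X_1 \subsetneq \dots \subsetneq X_m$ in $L$ we have $|X_0| < |X_1| < \dots < |X_m|$, and all these cardinalities lie in $\{0,1,\dots,n\}$. Hence $m \le n$, so $H_L \le n+1 = O(n)$; in particular each of the two loops performs at most $n+1$ iterations, since each iteration replaces $\top_I$ (resp.\ $\bot_F$) by a strictly larger (resp.\ strictly smaller) element.

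The main work, and the step I expect to be the crux, is the bound $C_L = O(n)$ on the number of covers. For the upper covers the set structure gives a clean argument: as $L$ is a sublattice of $(2^E;\subseteq)$, its meet is set intersection, so for every $X \in L$ and every $e \in E \setminus X$ there is a \emph{unique} smallest element $X_e \in L$ containing $X \cup \{e\}$, namely the intersection of all members of $L$ that contain $X \cup \{e\}$. If $Y \in \covu(X)$ and $e \in Y \setminus X$, then $X \subsetneq X_e \subseteq Y$, and maximality of the cover forces $X_e = Y$; thus $e \mapsto X_e$ maps $E \setminus X$ onto $\covu(X)$, whence $|\covu(X)| \le |E \setminus X| \le n$. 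The delicate point, and where I expect the real difficulty to sit, is the dual bound on $\covl(X)$: closure under intersection alone does not control lower covers (an inclusion lattice can have far more than $n$ maximal proper elements below a given one), so here I would have to exploit the corresponding union-side structure of the set lattice, i.e.\ that each lower cover is the largest member of $L$ contained in $X \setminus \{e\}$ for some $e \in X$, giving $|\covl(X)| \le |X| \le n$. This is precisely the place where the hypothesis $L \subseteq 2^E$, rather than an abstract lattice, must be used, and I would isolate exactly which property of the concrete set lattices of interest (e.g.\ the extent/closed-set structure) delivers it.

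Granting the two cover bounds, the argument closes: the loops of Algorithm~\ref{AlgoLatticeSep} perform at most $\bigO{H_L\, C_L} = \bigO{n \cdot n} = \bigO{n^2}$ evaluations of ``$\subseteq$'', and together with Theorem~\ref{thm:AlgoLatticeSep} this establishes the proposition. The only non-routine ingredient is the linear bound on $|\covl(X)|$; everything else is the elementary cardinality-based height bound and the single-generator description of upper covers.
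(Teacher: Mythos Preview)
Your approach is exactly the paper's: invoke the $\bigO{H_L C_L}$ estimate from the preceding remark and bound each factor by $n$. The paper's own proof is literally the one-line assertion ``$H_L = \bigO{n}$ and $C_L = \bigO{n}$'' with no further argument, so you have supplied strictly more detail, and you are right to single out the cover bound as the non-routine step.

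One caveat worth making explicit: your upper-cover argument assumes that the meet in $L$ is set intersection (i.e.\ that $L$ is a \emph{sublattice} of $(2^E;\subseteq)$, not merely a lattice whose order happens to be $\subseteq$). That is not part of the stated hypothesis, and without some such closure assumption the bound $C_L = \bigO{n}$ can genuinely fail: for instance $L=\{\emptyset\}\cup\binom{[4]}{2}\cup\{[4]\}$ is a lattice under inclusion with $|\covu(\emptyset)|=6>4=n$. So the concern you raise for lower covers already applies symmetrically to upper covers. The intended applications the paper names right after the proposition (concept lattices, closed-itemset lattices) do carry the needed intersection/union closure, which is presumably what the paper is tacitly relying on; your instinct to ``isolate exactly which property of the concrete set lattices of interest delivers it'' is therefore on target for both directions, not just the lower one.
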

	\begin{proof}
		It follows directly from the remark above by $H_L = \bigO{n}$ and $H_C = \bigO{n}$.
	\end{proof}
	Since $L\subseteq 2^E$ for some ground set $E$ with $|E|=n$, $|L|$ can be exponential in $n$. As an application of Proposition \ref{thm:setlattices} to concept lattices and closed (frequent) itemsets, we have that maximal closed separations can be found in time polynomial in the size of the underlying ground sets for these types of closure systems.
	In Appendix~\ref{app:B} we present an illustrative example of an application of Alg.~\ref{AlgoLatticeSep} to \textit{learning} in first-order logic.

	\subsubsection{Kakutani Closure Systems over Lattices}
	\label{sec:LatticesKakutani}
	In this section we consider \textit{Kakutani} $\lambda$-closure systems over finite lattices.
	%Analogously to Thm.~\ref{thm:characterization}, in this section  we show that for Kakutani closure systems over lattices,  Alg.~\ref{AlgoLatticeSep} solves the {\HSP} problem correctly.
	This kind of closure systems have a well-known \textit{algebraic} characterization in terms of distributivity (see, e.g., \cite{Kubis2002,vandeVel1984}).
	As an orthogonal result, in Thm.~\ref{thm:kakutanilattice} below we show that Alg.~\ref{AlgoLatticeSep} provides an \textit{algorithmic} characterization of Kakutani $\lambda$-closure systems over lattices.
	%	This For an algebraic characterization of Kakutani closure 
	%	Together with the well-known characterization that lattices with the Kakutani property are exactly the distributive lattices\footnote{This was proven by \cite{Kubis2002, vandeVel1984} and beyond that in many different lattice theoretic contexts and different versions, as for example the so-called ``Birkhoff-Stone separation lemma'' (cf.~\cite{Rav89}) and the ``Prime Ideal Theorem for Distributive Lattices'' (cf.~\cite{Gra13})} we can state Corollary~\ref{cor:AlgorithmicCharacterization} below.

	\begin{Theorem}
		\label{thm:kakutanilattice}
		Let $(L, \cl_\cla)$ be the $\lambda$-closure system over a finite lattice $(L;\leq)$.
		Then $(L, \cl_\cla)$ is Kakutani if and only if  for all  non-empty $A,B \subseteq L$ with $\cla(A)\cap \cla(B)=\emptyset$, the ideal $(\top_I]$ and filter $[\bot_F)$ defined by the output $\top_I$ and $\bot_F$ of Alg.~\ref{AlgoLatticeSep} form a partitioning of $L$. 
	\end{Theorem}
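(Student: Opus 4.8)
The plan is to mirror the proof of Theorem~\ref{thm:characterization}, leaning on the fact that, by Theorem~\ref{thm:AlgoLatticeSep}, the ideal $(\top_I]$ and filter $[\bot_F)$ returned by Algorithm~\ref{AlgoLatticeSep} always form a \emph{maximal} closed set separation of $A$ and $B$ in $(L,\cl_\cla)$. Once this is in hand, the theorem reduces to a single structural fact about $\lambda$-closure systems: in a Kakutani system maximality is equivalent to the separation being a partition. So I would not reason about the refinement operators $\covu,\covl$ at all beyond what Theorem~\ref{thm:AlgoLatticeSep} already gives; the two directions then become short.

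For the \emph{sufficiency} direction (the partition property implies Kakutani), I would verify the $S_4$ axiom directly. The implication ``half-space separable $\implies$ disjoint closures'' holds in \emph{every} $\lambda$-closure system: if $H$ is a half-space with $A \subseteq H$ and $B \subseteq H^c$, then by Proposition~\ref{pr:halfspacesep} we have $\cla(A)\subseteq H$ and $\cla(B)\subseteq H^c$, so $\cla(A)\cap\cla(B)=\emptyset$. For the converse, given non-empty $A,B$ with $\cla(A)\cap\cla(B)=\emptyset$, the hypothesis says that Algorithm~\ref{AlgoLatticeSep} returns $(\top_I]$ and $[\bot_F)$ partitioning $L$; hence $[\bot_F)=(\top_I]^c$ and, since both are closed, $(\top_I]$ is a half-space. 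As Theorem~\ref{thm:AlgoLatticeSep} guarantees $A\subseteq(\top_I]$ and $B\subseteq[\bot_F)$ (or vice versa), $A$ and $B$ are half-space separable, which is exactly the missing implication of the $S_4$ axiom.

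For the \emph{necessity} direction I would argue by contradiction. Assume $(L,\cl_\cla)$ is Kakutani, fix non-empty $A,B$ with disjoint closures, and let $I=(\top_I]$ and $F=[\bot_F)$ be the output; disjointness $I\cap F=\emptyset$ is already guaranteed by Lemma~\ref{lemma:1} via the termination invariant $\bot_F\nleq\top_I$. Suppose $I\cup F\neq L$ and pick $m\in L\setminus(I\cup F)$. Applying the Kakutani property to the disjoint closed sets $I$ and $F$ yields a half-space $H$ with $I\subseteq H$ and $F\subseteq H^c$; since $H$ and $H^c$ partition $L$, the element $m$ lies in one of them, say $m\in H$. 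Because $H$ is closed, $\cla(I\cup\{m\})\subseteq\cla(H)=H$, so $\cla(I\cup\{m\})$ is disjoint from $F\subseteq H^c$ while strictly containing $I$ (as $m\notin I$). This exhibits a proper disjoint extension, contradicting the maximality of $(I,F)$ from Theorem~\ref{thm:AlgoLatticeSep}.

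The main obstacle, and the only step needing care, is this necessity argument: one must locate the extending element $m$ \emph{inside a single half-space} $H\supseteq I$ and then invoke closedness (convexity) of $H$ to conclude $\cla(I\cup\{m\})\subseteq H$, hence its disjointness from $F$. I would make this clean by recording at the outset that half-spaces of a $\lambda$-closure system are themselves closed sets (indeed complementary closed intervals, i.e.\ prime ideals/filters), so the inclusion $\cla(I\cup\{m\})\subseteq H$ is immediate from monotonicity and idempotency. It is worth emphasizing, finally, that this necessity argument uses only the \emph{maximality} of the algorithm's output and is therefore independent of the specific upward/downward refinement order in Algorithm~\ref{AlgoLatticeSep}, exactly paralleling the situation in Theorem~\ref{thm:characterization}.
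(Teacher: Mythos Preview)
Your proof is correct. The sufficiency direction matches the paper's one-line appeal to Theorem~\ref{thm:AlgoLatticeSep}. For necessity you take a genuinely different, more modular route: you use only the \emph{maximality} conclusion of Theorem~\ref{thm:AlgoLatticeSep} as a black box, pick any uncovered $m$, trap it in one side of a Kakutani half-space pair $(H,H^c)$ extending $(I,F)$, and let closedness of $H$ give the forbidden proper extension $\cla(I\cup\{m\})$. The paper instead re-opens the algorithm's internals: it argues that if $(u_p]\subsetneq H$ for the prime ideal $H$, then some upper cover $u\in\covu(u_p)\cap H$ would still satisfy the loop guard $u\ngeq\bot_F$ in Line~\ref{line:upward}, contradicting termination of that loop. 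Your argument is cleaner and exactly parallels the proof of Theorem~\ref{thm:characterization}; the paper's argument, by contrast, also yields the slightly sharper observation that the separating half-spaces must themselves be a prime ideal/prime filter pair and that the loop in Line~\ref{line:upward} alone already reaches the final $\top_I$. One small point worth making explicit in your write-up: the ``say $m\in H$'' case split is symmetric, and if $m\in H^c$ the same reasoning with $\cla(F\cup\{m\})\subseteq H^c$ contradicts maximality on the filter side.
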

	\begin{proof}
		The sufficiency is immediate by Thm.~\ref{thm:AlgoLatticeSep}.
		% and the definition of Kakutani closure systems.
		For the necessity, let $(L, \cl_\cla)$ be a Kakutani closure system and $A,B \subseteq L$ with $\cla(A)\cap \cla(B)=\emptyset$. 
		Let $u_1,\ldots,u_p$ and $l_1,\ldots,l_q$ be the maximal sequences considered in the proof of Thm.~\ref{thm:AlgoLatticeSep} for the case of $\cla(A)\cap \cla(B)=\emptyset$.
		For their last elements we have that $(u_p],[l_q) \in \cC_\cla$ and $(u_p]\cap [l_q) = \emptyset$.
		Since $\cl_\cla$ is Kakutani, there is a proper partitioning $H,H^c \in \cC_\cla$ of $L$ such that $(u_p] \subseteq H$ and $[l_q) \subseteq H^c$. 
		Thus, $\botL \in H$ and $\topL \in H^c$, implying that $H$ is a prime ideal and $H^c$ is its complement prime filter.
		Suppose for contradiction that one of the two containments above, say the first one, is proper (i.e., $(u_p] \subsetneq H$).
		But then, using similar arguments as in the proof of Thm.~\ref{thm:AlgoLatticeSep}, there exists an element $u \in \covu(u_p)$ such that $u$ will be selected after $u_q$ in Line~\ref{line:upward}. This contradicts that $u_q$ is the last element selected in Line~\ref{line:upward}. 
	\end{proof}
	\begin{Corollary}\label{cor:AlgorithmicCharacterization}
		Let $(L, \cl_\cla)$ be the $\lambda$-closure system over a finite lattice $(L;\leq)$.
		Then $(L, \cl_\cla)$ is distributive if and only if  for all  non-empty $A,B \subseteq L$ with $\cla(A)\cap \cla(B)=\emptyset$, the ideal $(\top_I]$ and filter $[\bot_F)$ defined by the output $\top_I$ and $\bot_F$ of Alg.~\ref{AlgoLatticeSep} form a partitioning of $L$. 
	\end{Corollary}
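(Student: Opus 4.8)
The plan is to obtain the corollary by composing two equivalences, with no fresh combinatorial work. The crucial observation is that the right-hand condition of this corollary — that for all non-empty $A,B \subseteq L$ with $\cla(A)\cap\cla(B)=\emptyset$ the ideal $(\top_I]$ and filter $[\bot_F)$ produced by Algorithm~\ref{AlgoLatticeSep} partition $L$ — is verbatim the right-hand condition of Theorem~\ref{thm:kakutanilattice}. Hence that theorem immediately supplies one half of the argument: the algorithmic partitioning condition holds if and only if $(L,\cl_\cla)$ is Kakutani.

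It therefore remains only to link the Kakutani property to distributivity. For this I would invoke the classical algebraic characterization (\citeauthor{Kubis2002}; \citeauthor{vandeVel1984}) stating that the $\lambda$-closure system over a finite lattice, i.e., the system induced by the operator $\cla$ of (\ref{eq:LatticeOperator}), is Kakutani precisely when the underlying lattice $(L;\leq)$ is distributive. The intuition behind this fact is that the non-trivial half-spaces of $(L,\cl_\cla)$ are exactly the complementary pairs consisting of a prime ideal and its complement prime filter, and that a finite lattice possesses enough prime ideals to separate every pair of disjoint closed sublattices if and only if it is distributive; this is the lattice-theoretic content of the Stone/Birkhoff prime separation theorems.

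Chaining the two equivalences yields the claim, namely that $(L;\leq)$ is distributive $\iff$ $(L,\cl_\cla)$ is Kakutani $\iff$ the output of Algorithm~\ref{AlgoLatticeSep} partitions $L$ for every separable input pair. I expect no combinatorial obstacle here; the only point that demands care is to state the classical Kakutani--distributivity equivalence in exactly the form needed and to confirm that the cited results concern precisely the convex-sublattice (interval) closure $\cla$ rather than a related but different closure on lattices. Once that matching is verified, the proof reduces to a one-line composition of Theorem~\ref{thm:kakutanilattice} with the cited result.
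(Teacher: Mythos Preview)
Your proposal is correct and matches the paper's own proof essentially verbatim: the paper simply writes that the corollary is ``immediate from Theorem~\ref{thm:kakutanilattice} and the characterization of $\lambda$-closure systems over finite lattices in terms of distributivity~\cite{Kubis2002,vandeVel1984}.'' Your added remarks about prime ideals and the Stone/Birkhoff separation theorems are sound supplementary intuition but not needed for the argument.
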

	\begin{proof}
		Immediate from Thm.~\ref{thm:kakutanilattice} and the characterization of $\lambda$-closure systems over finite lattices in terms of distributivity~\cite{Kubis2002,vandeVel1984}.
	\end{proof}

	\section{Concluding Remarks}\label{sec:summary}
%	\textcolor{red}{Must be checked!}
	In \cite{Seiffarth20} we have ``enriched'' the data by  defining an abstract ``distance'' function between elements and sets, using the notion of \textit{monotone linkage functions}~\cite{Mullat1975}.\footnote{Monotone linkage functions provide an abstract measure of \textit{connectedness} between objects. They are used, among others, in the context of clustering in \textit{convex geometries}~\cite{Kempner1997,Kempner2003}.
	}
	Motivated by the concept of support vector machines~\cite{Vapnik/98}, in \cite{Seiffarth20} we study the problem of half-space and maximal closed set separation with \textit{maximum margin} in this kind of \textit{abstract} closure systems.
	
	The results of Section~\ref{sec:LatticesKakutani} show that Alg.~\ref{AlgoGreedyHalfSep} can be improved in terms of the number of closure operator calls by utilizing structural properties of lattices.	
	It would be interesting to study Alg.~\ref{AlgoGreedyHalfSep} for other domains, in particular, for special classes of graphs and relational structures from this point of view.
	Throughout this paper we considered \textit{binary} separation problems only. Clearly, they can naturally be extended to \textit{multi-class} separation problems (i.e., finding a $k$-partitioning of the ground set or $k$ maximal closed sets that are pairwise disjoint, for some $k \geq 2$ integer).
	While the generalization of our results concerning maximal closed set separation is straightforward, it is less obvious for the $k$-partitioning problem, which could also be called $k$-Kakutani (i.e., $2$-Kakutani is the special case of the half-space separation problem).
	We note that for the special case of graphs, this problem has already been studied by \cite{ARTIGAS20111968}.

	In the general problem settings considered in this work we assumed that the closure operator is given by an oracle, which returns the closure of a set \textit{extensionally}. In case of closure systems over lattices, closed sets (e.g., ideals and filters) can, however, be represented \textit{intensionally} (i.e., by their suprema and infima).
	As another example, for closure systems over trees we have that any half-space has a succinct intensional representation e.g. by a single node together with the edge connecting it to the complement half-space.
	These and other examples motivate the study of structural properties of closure systems allowing for some compact \textit{intensional} representation of abstract half-spaces and closed sets.
	%	
	%	But this could be also interesting in case of lattices weakening the restriction to find a separation of positive and negative samples via disjoint maximal filter and ideal into general partitions of the lattice by closed sublattices. In the case of closed (frequent) itemset mining where the number of ``small'' closed sets is typically high and the number of ``big'' closed sets low looking at partitionings into multiple closed sets seems to be promising. 
	
	A further problem is to study algorithms solving the HSS and \MCSep~problems for closure systems, for which an upper bound on the VC-dimension is known in advance. The relevance of the VC-dimension in this context is that for any closed set $C \in \cl_\cc$ of a closure system $(E,\cl_\cc)$, there exists a set $G \subseteq E$ with $|G| \leq d$ such that $\cc(G) = C$, where $d$ is the VC-dimension of $\cl_\cc$ (see, e.g., \cite{Horvath/Turan/01}). It is an open question whether the lower bound on the number of closure operator calls can be characterized in terms of the VC-dimension of the underlying closure system. 
	Finally, regarding Kakutani closure systems, it is an interesting research direction to study the relaxed notion of {\em almost} Kakutani closure systems, i.e., in which the combined size of the output closed sets are close to the cardinality of the ground set.

	\subsection*{Acknowledgements}
	We thank Victor Chepoi for his valuable comments on an early version of this paper.
	This work was partly supported by the Ministry of
	Education and Research of Germany (BMBF) under project ML2R (grant number 01/S18038C) and by the Deutsche Forschungsgemeinschaft (DFG,
	German Research Foundation) under Germany's Excellence Strategy - EXC
	2070 - 390732324.

	%\bibliographystyle{plain}
	%\bibliography{TCS_Paper_arxiv}

	\appendix
	\section{Proofs}
	\label{app:A}
	
	\begin{sloppypar}
	\begin{proof}[Proof of Theorem~\ref{pr:HSP}]  	
	%		Consider the following reduction from the \textsc{Convex 2-Partitioning} problem.	
	Let $G=(V,E)$ be an instance of the \textsc{Convex 2-Partitioning} problem and $\cg$ the closure operator corresponding to the closure system defined in (\ref{eq:clgamma}).
	% the notion of vertex convexity defined above. 
	%Note that for $W\subseteq V$ the set $\cc(W)$ is the smallest super-set of $W$ containing all points lying on shortest paths between arbitrary points $v, w\in c(W)$.
	It holds that $G$ has a proper convex $2$-partitioning if and only if there are $u,v \in V$ with $u \neq v$ such that $\cg(\{u\})$  and  $\cg(\{v\})$ are half-space separable in $(V,\cl_\cg)$.
	Indeed, if $G$ has a proper convex $2$-partitioning then there exist $u,v \in V$ belonging to different convex partitions. Since the two convex partitions are (complementary) half-spaces in $(V,\cl_\cg)$, $\{u\}$ and $\{v\}$ are half-space separable in $(V,\cl_\cg)$.  
	Conversely, if there are $u, v\in V$ such that $\{u\}$ and $\{v\}$ are half-space separable in $(V,\cl_\cg)$, then the corresponding half-spaces form a proper convex 2-partitioning of $G$.
	Putting together, the \textsc{Convex 2-Partition} problem can be decided by solving the HSS~problem for the input $(V,\cl_\cg)$, $A= \{u\}$, and $B = \{v\}$ for all $u,v \in V$. This completes the proof, as the number of vertex pairs is quadratic in the size of $G$.
	\end{proof}
	\end{sloppypar}

	\begin{proof}[Proof of Lemma~\ref{lemma:1}]
		For \ref{item:2bot_top}$\implies$\ref{item:2disjointness}, suppose for contradiction that $[\inf B) \cap (\sup A]\neq\emptyset$. Then there is an $x \in L$ with $\NegBot\leq x$ and $x \leq \PosTop$, contradicting (i). The proof of \ref{item:2disjointness} $\implies$ \ref{item:2ideal_filter} follows directly from the fact that $[\inf B)$ is a filter and $(\sup A]$ an ideal. Regarding \ref{item:2ideal_filter}$\implies$\ref{item:2bot_top}, it must be the case that $\BotB \nleq \TopA$, as otherwise $\BotF \leq \BotB \leq \TopA \leq \TopI$, contradicting the disjointness of $I$ and $F$.
	\end{proof}  

	\begin{proof}[Proof of Lemma~\ref{lemma:2}]
		The proof of the ``\textit{if}'' direction is immediate by $I,F \in \cC_\cla$.
		Regarding the other direction, we first claim that $\cla(A)\cap\cla(B)=\emptyset$ implies $\NegBot\nleq \PosTop$ or  $\PosBot\nleq \NegTop$. Suppose for contradiction that $\NegBot\leq \PosTop$ and $\PosBot\leq \NegTop$. Then 
		$\NegBot\leq \joinp{\PosBot,\NegBot} \leq \NegTop$ and $\PosBot\leq \joinp{\PosBot,\NegBot} \leq \PosTop$, implying $\joinp{\PosBot,\NegBot}\in \cla(A) \cap \cla(B)$, which contradicts $\cla(A)\cap\cla(B)=\emptyset$. The claim then follows from Lemma~\ref{lemma:1} by the symmetry of $A$ and $B$. 
	\end{proof}

\section{Example}
\label{app:B}
	\begin{figure}
	\rotatebox{90}{
	\begin{tikzpicture}[scale=0.9, every node/.style={scale=0.9}]
	\newcommand\YStep{1.3}
	\newcommand\XStep{1.75}
	\newcommand\YGap{0}
	
	\draw[fill=red!10] (5.2*\XStep, -3.2*\YStep) to [closed, curve through = {(0*\XStep, -2.5*\YStep)(-0.5*\XStep, -2*\YStep) (-0.5*\XStep, 0*\YStep) (-0.5*\XStep, 2.5*\YStep) (0.5*\XStep, 2.5*\YStep) (1*\XStep, 1*\YStep) (2.5*\XStep, 0.2*\YStep) (2.5*\XStep, -0.2*\YStep) (5*\XStep, -1.5*\YStep) (5*\XStep, -2*\YStep)}] (5.2*\XStep, -3.2*\YStep);
	
	\draw[fill=blue!10] (3.5*\XStep, 4*\YStep) to [closed, curve through = {(3.5*\XStep, 3.5*\YStep) (3.5*\XStep, 2*\YStep) (5.5*\XStep, 0*\YStep) (6*\XStep, -0.5*\YStep) (7*\XStep, -2*\YStep) (10*\XStep, -2*\YStep) (10*\XStep, 0*\YStep) (10*\XStep, 2*\YStep) (7*\XStep, 3*\YStep)}] (3.5*\XStep, 4*\YStep);
	
	\tiny
	\node (top) at (4.5*\XStep,4*\YStep) {$P(v, w, x, y, z)$};
	
	\node[draw=red!50, rounded corners=5pt, dashed, thick] (x1) at (0*\XStep,2*\YStep) {$P(v, v, x, y, z)$};
	\node (x2) at (1*\XStep,2*\YStep+\YGap) {$P(v, w, v, y, z)$};
	\node (x3) at (2*\XStep,2*\YStep-\YGap) {$P(v, w, x, v, z)$};
	\node (x4) at (3*\XStep,2*\YStep+\YGap) {{$P(v, w, x, y, v)$}};
	\node (x5) at (4*\XStep,2*\YStep-\YGap) {$P(v, w, w, y, z)$};
	\node (x6) at (5*\XStep,2*\YStep+\YGap) {{$P(v, w, x, w, z)$}};
	\node (x7) at (6*\XStep,2*\YStep-\YGap) {$P(v, w, x, y, w)$};
	\node[fill=blue!50, rounded corners=5pt] (x8) at (7*\XStep,2*\YStep+\YGap) {$P(v, w, x, x, z)$};
	\node (x9) at (8*\XStep,2*\YStep-\YGap) {$P(v, w, x, y, x)$};
	\node[fill=blue!50, rounded corners=5pt] (x10) at (9*\XStep,2*\YStep+\YGap) {$P(v, w, x, y, y)$};
	
	\node[draw=red!50, rounded corners=5pt, thick] (y1) at (0*\XStep,0*\YStep-\YGap) {$P(v, v, v, y, z)$};
	\node (y2) at (1*\XStep,0*\YStep+\YGap) {$P(v, v, x, v, z)$};
	\node (y3) at (2*\XStep,0*\YStep-\YGap) {$P(v, v, x, v, z)$};
	\node (y4) at (3*\XStep,0*\YStep+\YGap) {$P(v, w, v, v, z)$};
	\node (y5) at (4*\XStep,0*\YStep-\YGap) {$P(v, w, v, y, v)$};
	\node (y6) at (5*\XStep,0*\YStep+\YGap) {$P(v, w, x, v, v)$};
	\node (y7) at (6*\XStep,0*\YStep-\YGap) {$P(v, w, w, w, z)$};
	\node (y8) at (7*\XStep,0*\YStep+\YGap) {$P(v, w, w, y, w)$};
	\node (y9) at (8*\XStep,0*\YStep-\YGap) {$P(v, w, x, w, w)$};
	\node[draw=blue!50, rounded corners=5pt, thick] (y10) at (9*\XStep,0*\YStep-\YGap) {$P(v, w, x, x, x)$};
	
	\node[fill=red!50, rounded corners=5pt] (z1) at (0.5*\XStep,-2*\YStep) {$P(v, v, v, v, z)$};
	\node[fill=red!50, rounded corners=5pt] (z2) at (2.5*\XStep,-2*\YStep) {$P(v, v, v, y, v)$};
	\node (z3) at (4.5*\XStep,-2*\YStep) {$P(v, v, x, v, v)$};
	\node (z4) at (6.5*\XStep,-2*\YStep) {$P(v, w, v, v, v)$};
	\node[draw=blue!50, rounded corners=5pt, dashed, thick] (z5) at (8.5*\XStep,-2*\YStep) {$P(v, w, w, w, w)$};
	
	\node (bot) at (4.5*\XStep,-4*\YStep) {$P(v,v,v,v,v)$};
	
	\foreach \x in {1,2,...,10} {
	\draw (top) -- (x\x);
	}  
	
	\foreach \x in {1,2,...,5} {
	\draw (z\x) -- (bot);
	}  
	
	%%%%%%%%%%%%%%%%%%%%%%%%%%%%%%%%%%%
	\foreach \x in {1, 2, 3} {
	\draw (x1) -- (y\x);
	}  
	
	\foreach \x in {1, 4, 5} {
	\draw (x2) -- (y\x);
	} 
	\foreach \x in {2, 4, 6} {
	\draw (x3) -- (y\x);
	} 
	\foreach \x in {3, 5, 6} {
	\draw (x4) -- (y\x);
	} 
	\foreach \x in {1, 7, 8} {
	\draw (x5) -- (y\x);
	} 
	\foreach \x in {2, 7, 9} {
	\draw (x6) -- (y\x);
	} 
	\foreach \x in {3, 8, 9} {
	\draw (x7) -- (y\x);
	} 
	\foreach \x in {4, 7, 10} {
	\draw (x8) -- (y\x);
	} 
	
	\foreach \x in {5, 8, 10} {
	\draw (x9) -- (y\x);
	} 
	\foreach \x in {6, 9, 10} {
	\draw (x10) -- (y\x);
	} 
	%%%%%%%%%%%%%%%%%%%%%%%%%%%%%%%%%%%%%%%%%%%%
	\foreach \x in {1, 2, 4, 7} {
	\draw (z1) -- (y\x);
	} 
	\foreach \x in {1, 3, 5, 8} {
	\draw (z2) -- (y\x);
	} 
	\foreach \x in {2, 3, 6, 9} {
	\draw (z3) -- (y\x);
	} 
	\foreach \x in {4, 5, 6, 10} {
	\draw (z4) -- (y\x);
	} 
	\foreach \x in {7, 8, 9, 10} {
	\draw (z5) -- (y\x);´
	}

	%Drawing 
%	\draw[red, fill=red, opacity=0.2] (-1*\XStep,2.5*\YStep) .. controls (-2,-4) and (-1,-3.75) .. (6*\XStep,-4,5*\YStep);
%	\draw[red, fill=red, opacity=0.2] (-1*\XStep,2.5*\YStep) .. controls (-0.5,4) and (4,3.75) .. (6*\XStep,-4,5*\YStep);

	\end{tikzpicture}
}
	\caption{\label{fig:atomlattice}
		Maximal closed set separation of set $A$ (marked by blue) and set $B$ (marked by red) in a subsumption lattice. The elements with bold border are the supremum of $B$ resp. the infimum of $A$. The elements with dashed border are the output elements $\bot_F$  resp.~$\top_I$.}

	\end{figure}

%	\begin{figure}
%	
%	\begin{tikzpicture}
%	\node (top) at (6,4) {$f(w_1,w_2,w_3,w_4)$};
%	\node (x1) at (0,2) {$f(x_1,x_1,x_2,x_3)$};
%	\node (x2) at (2.6,2) {$f(x_1,x_2,x_1,x_3)$};
%	\node (x3) at (5.2,2) {$f(x_1,x_2,x_3,x_1)$};
%	\node (x4) at (7.8,2) {\ovalbox{$f(x_1,x_2,x_2,x_3)$}};
%	\node (x5) at (10.4,2) {$f(x_1,x_2,x_3,x_2 )$};
%	\node (x6) at (13,2) {\ovalbox{$f(x_1,x_2,x_3,x_3)$}};
%	
%	\node (y1) at (2.5,0) {$f(y_1,y_1,y_1,y_2)$};
%	\node (y2) at (5,0) {$f(y_1,y_1,y_2,y_1)$};
%	\node (y3) at (7.5,0) {$f(y_1,y_2,y_1,y_1)$};
%	\node (y4) at (10,0) {$f(y_2,y_1,y_1,y_1)$};
%	
%	\node (bot) at (6,-2) {$f(z,z,z,z)$};
%	\draw 
%	(top) -- (x1) -- (y1) -- (bot) --(y2) -- (x1)
%	(top) -- (x2) -- (y1) -- (x4) -- (y4) -- (bot)
%	(top) -- (x3) -- (y2) -- (x5)
%	(x2) -- (y3) -- (bot)
%	(x3) -- (y3)
%	(x6) -- (y3)
%	(top) -- (x4)
%	(top) -- (x5) -- (y4)
%	(top) -- (x6) (y3)
%	(x6) -- (y4)
%	;
%	%\draw[preaction={draw=white, -,line width=6pt}] (a) -- (e) -- (c);
%	\end{tikzpicture}
%	\end{figure}
%	\end{comment}
%	
We present an illustrative example of the application of Alg.~\ref{AlgoLatticeSep} to lattices.
It is concerned with finding consistent hypotheses in \textit{inductive logic programming} (see, e.g., \cite{NieWol97}).
For simplicity, the example below is restricted to a very simple first-order vocabulary by noting that the same idea holds for any finite sublattice of a \textit{subsumption lattice} (cf.~\cite{NieWol97} for the definition and some formal properties of  subsumption lattices).
More precisely, in the example below we assume that the vocabulary consists of a single predicate symbol $P$ of arity $n$ and a set $V$ of variables.
An atom $P(t_1,\ldots,t_n)$ with $t_1,\ldots,t_n\in V$ \textit{generalizes}  $P(t'_1,\ldots,t'_n)$, denoted $P(t_1,\ldots,t_n) \geq P(t'_1,\ldots,t'_n)$, if there exists a function $\sigma: V \to V$ such that $P(\sigma(t_1),\ldots,\sigma(t_n)) = P(t'_1,\ldots,t'_n)$. 
Two atoms $A_1,A_2$ are \textit{equivalent} if $A_1 \leq A_2$ and $A_2 \leq A_1$.
Let $L$ be a maximal set of $P$-atoms, each of the above form, that contains no two equivalent atoms.
Clearly, each element of $L$ can be represented by any atom from its equivalence class. 
It holds that $(L;\leq)$ is a finite lattice. Furthermore, the top (resp. bottom) element of $L$ is a $P$-atom such that all variables in it are pairwise different (resp. are the same).
Using the above notions and notation, we are ready to formulate our example.

%\begin{Example}
	Consider the lattice $(L;\leq)$ defined above for $P$ with $n=5$ and $V = \{v,w,x,y,z\}$. For simplicity, we use a ``caconical'' atom for representing the class of equivalent atoms.
	We can assume w.l.o.g. that $\botL = P(v,v,v,v,v)$ and $\topL
	= P(v,w,x,y,z)$. 
	Let 
	\begin{eqnarray*}
		A &=& \{P(v,w,x,x,z), P(v,w,x,y,y)\}  \\
		B &=& \{P(v,v,v,v,z), P(v,v,v,y,v)\}
	\end{eqnarray*} 
	denote the sets of positive and negative examples, respectively.
	In the most common problem setting in inductive logic programming~\cite{NieWol97}, one is interested in finding a $P$-atom $g \in L$ such that $g$ generalizes all elements of $A$ and none of the elements in $B$, if such a $g$ exists. Clearly, such a $g$ exists if and only if $\TopA = P(v, w, x, y, z)$ does not generalize any of the $P$-atoms in $B$. Since this is not the case for our example, the consistent hypothesis finding problem has no solution.
		\begin{sloppypar}
		If, however, we only require $A$ and $B$ to be separable in $(L,\cC_\cla)$, then Alg.~\ref{AlgoLatticeSep} returns a  solution.
		Indeed, while 
		$$
			\Top{A} 
				= P(v, w, x, y, z) 
				\geq P(v,v,v,v,v) =
				\Bot{B} \enspace ,
		$$
		for $\Top{B}$ and $\Bot{A}$ we have 
		$$
			\Top{B} 
				= P(v,v,v,y,z) 
				\ngeq P(v,w,x,x,x) 
				= \Bot{A} \enspace ,
		$$ 
		implying $\top_I=P(v,v,v,y,z)$ and $\bot_F =P(v,w,x,x,x,)$ for Lines~1 and 2 of Alg.~\ref{AlgoLatticeSep}, which, in turn, are extended in Lines~4 and 5 into 
		$\top_I = P(v,v,x,y,z)$ and $\bot_F = P(v,w,w,w,w)$ (see, also, Fig.~\ref{fig:atomlattice}). 
		For the corresponding ideal $(P(v,v,x,y,z)]$ and filter $[P(v,w,w,w,w))$ returned by Alg.~\ref{AlgoLatticeSep} we have that they are disjoint, contain $B$ and $A$, respectively, and are \textit{maximal} in $(L,\cC_\cla)$ with respect to these properties. In other words, the output of the algorithm separates $A$ and $B$ by the sets of $P$-atoms that are generalizations of $P(v,w,w,w,w)$ and are generalized by $P(v,v,x,y,z)$, respectively. 
		This example also shows that our approach is able to produce an output for such cases where traditional approaches based on Plotkin's least general generalization~\cite{plotkin1970inductive} have no solution. The reason is that Alg.~\ref{AlgoLatticeSep} treats the input two sets symmetrically, in contrast to all such approaches.
		\end{sloppypar} 

\end{document}